\newcommand{\Extra}[1]{}
\newcommand{\OCMVI}{Vovk:2015-cross}
\newcommand{\OCMVII}{Vovk/Petej:2014UAI}
\newcommand{\OCMXIII}{Vovk/etal:2015NIPS}
\newcommand{\OCMXVII}{Vovk/etal:2019ML}
\newcommand{\OCMXVIII}{Vovk:2019COPA}
\newcommand{\OCMXIX}{Vovk/Bendtsen:2018}
\newcommand{\OCMXX}{Vovk/etal:2018Braverman}
\newcommand{\OCMXXI}{Vovk:arXiv1212}
\newcommand{\OCMXXIII}{Vovk/etal:arXiv1902}
\newcommand{\AI}{\\\hspace*{3cm}}
\newtheorem{theorem}{Theorem}
\newtheorem{proposition}[theorem]{Proposition}
\theoremstyle{definition}
\newtheorem{definition}{Definition}
\newtheorem{example}[theorem]{Example}
\newtheorem{remark}{Remark}
\DeclareMathOperator{\Prob}{\mathbb{P}}
\DeclareMathOperator{\Expect}{\mathbb{E}}
\DeclareMathOperator{\conv}{conv}
\DeclareMathOperator{\CRPS}{CRPS}
\DeclareMathOperator{\II}{\boldsymbol{1}}
\newcommand{\R}{\mathbb{R}}
\newcommand*{\dd}{\,\mathrm{d}}
\title{Computationally efficient versions of conformal predictive distributions}
\author{Vladimir Vovk, Ivan Petej, Ilia Nouretdinov,\\Valery Manokhin, and Alex Gammerman}
\begin{document}
\maketitle

\begin{abstract}
  Conformal predictive systems are a recent modification of conformal predictors
  that output, in regression problems, probability distributions for labels of test observations
  rather than set predictions.
  The extra information provided by conformal predictive systems may be useful, e.g.,
  in decision making problems.
  Conformal predictive systems inherit the relative computational inefficiency of conformal predictors.
  In this paper we discuss two computationally efficient versions of conformal predictive systems,
  which we call split conformal predictive systems and cross-conformal predictive systems.
  The main advantage of split conformal predictive systems
  is their guaranteed validity,
  whereas for cross-conformal predictive systems
  validity only holds empirically and in the absence of excessive randomization.
  The main advantage of cross-conformal predictive systems
  is their greater predictive efficiency.
\end{abstract}

\section{Introduction}

Two sister methods that have been widely presented at the COPA series of workshops
are conformal prediction and Venn prediction.
Both methods enjoy provable properties of validity under the IID model
but their outputs are very different:
whereas Venn predictors output probabilities
(more precisely, upper and lower probabilities),
conformal predictors output p-values
(often packaged as prediction sets).
Not only the outputs but also the areas of application are different for the two methods:
Venn predictors are only applicable to classification problems,
whereas conformal predictors are applicable to both classification and regression.

A recent development in conformal prediction has been
the definition and study of conformal predictive systems
(CPS, which we use for both singular and plural)
in \cite{\OCMXVII},
based on the parallel work on predictive distributions in parametric statistics
(see, e.g., \cite[Chapter~12]{Schweder/Hjort:2016} and \cite{Shen/etal:2017}).
In the case of regression problems, CPS output predictive distributions;
the difference between p-values and probabilities is often emphasized in statistics,
but in the case of CPS the p-values get arranged into a probability distribution
thus essentially becoming probabilities.
This facilitates new uses of conformal prediction, such as automatic decision making
\cite{\OCMXIX}.
However, for many underlying algorithms CPS (like conformal predictors in general)
are computationally inefficient:
CPS require re-training the underlying algorithm
for each test object and each postulated label for it,
and this can be done efficiently only for a narrow class of underlying algorithms,
including Least Squares \cite{\OCMXVII} and Kernel Ridge Regression \cite{\OCMXX}.
The main aim of this paper is to define and study computationally efficient versions of CPS
without restrictions on the underlying algorithm.

A very recent development in Venn prediction has been the introduction,
in the terminology of this paper,
of split Venn--Abers predictive systems in \cite{Nouretdinov/etal:2018COPA} (COPA 2018),
which are another way to produce predictive distributions.
A secondary aim of this paper is to explore several versions of Venn--Abers predictive systems
and compare them with conformal predictive systems.

We start, in Section~\ref{sec:RPS}, from defining randomized predictive systems (RPS).
In Section~\ref{sec:SCPS} we define their special case, split conformal predictive systems (SCPS),
which are computationally efficient but may suffer loss of predictive efficiency as compared with CPS
(which is indirectly confirmed in our experiments in Section~\ref{sec:experiments},
where SCPS typically suffer larger losses than their competitor that uses data more efficiently).
An important advantage of SCPS is that they are, similarly to CPS, provably valid;
in Section~\ref{sec:SCPS}, a suitable notion of validity is defined
and the validity of SCPS is demonstrated (by referring to a standard result).

In Section~\ref{sec:CCPS} we build cross-conformal predictive systems (CCPS)
on top of split conformal predictive systems.
In principle CCPS can lose their validity
(and therefore, formally are no longer RPS),
but in practice they usually satisfy the requirement of validity,
as defined in Section~\ref{sec:SCPS}
(cf.\ the experiments in \cite{\OCMVI} and Section~\ref{sec:experiments}).

Section~\ref{sec:experiments} is devoted to comparing the predictive efficiency of SCPS and CCPS
and exploring their empirical validity.
In this paper, we measure predictive efficiency of predictive distributions
using a loss function called continuous ranked probability score (CRPS).
This loss function and the way it is applied in our context
are defined in the preceding section, Section~\ref{sec:CRPS}.

Sections~\ref{sec:univ} and~\ref{sec:VA} discuss more general issues
(to be described momentarily).
Section~\ref{sec:conclusion} concludes and gives directions of further research.

The conference version of this paper was published in the Proceedings of COPA 2018
\cite{Vovk/etal:2018COPA},
and the journal version is to be published in \emph{Neurocomputing}.
As compared with the conference version,
in the journal version we added a more detailed comparison of SCPS and CCPS,
a detailed discussion of Venn--Abers predictive systems (in Section~\ref{sec:VA}),
and the analysis of universality of various predictive systems
(in Section~\ref{sec:univ}).
An important finding here is that SCPS and CCPS are universal,
whereas Venn--Abers predictive systems are not.

\section{Randomized predictive systems}
\label{sec:RPS}

Fix a nonempty measurable space $\mathbf{X}$;
we will refer to it as our \emph{object space}.
Define the \emph{observation space} as $\mathbf{Z}:=\mathbf{X}\times\R$;
each observation $z=(x,y)\in\mathbf{Z}$ consists of an object $x\in\mathbf{X}$ and its label $y\in\R$.

We will use the following definition, given in \cite{\OCMXVII}
(a modification of the definition in \cite[Definition~1]{Shen/etal:2017}).
Let $U$ be the uniform probability measure on the interval $[0,1]$.
\begin{definition}\label{def:RPS}
  A function $Q:\mathbf{Z}^{n+1}\times[0,1]\to[0,1]$ is a \emph{randomized predictive system} (RPS)
  if it satisfies the following three requirements:
  \begin{itemize}
  \item[R1]
    \begin{itemize}
    \item[i]
      For each training sequence $(z_1,\ldots,z_n)\in\mathbf{Z}^n$ and each test object $x\in\mathbf{X}$,
      the function $Q(z_1,\ldots,z_n,(x,y),\tau)$ is monotonically increasing both in $y$ and in $\tau$
      (where ``monotonically increasing'' is understood in the wide sense allowing intervals of constancy).
      In other words, for each $\tau\in[0,1]$,
      the function
      \begin{equation}\label{eq:function}
        y\in\R
        \mapsto
        Q(z_1,\ldots,z_n,(x,y),\tau)
      \end{equation}
      is monotonically increasing,
      and for each $y\in\R$,
      the function
      \[
        \tau\in[0,1]
        \mapsto
        Q(z_1,\ldots,z_n,(x,y),\tau)
      \]
      is also monotonically increasing.
    \item[ii]
      For each training sequence $(z_1,\ldots,z_n)\in\mathbf{Z}^n$ and each test object $x\in\mathbf{X}$,
      \begin{equation}\label{eq:y-minus-infty}
        \lim_{y\to-\infty}
        Q(z_1,\ldots,z_n,(x,y),0)
        =
        0
      \end{equation}
      and
      \begin{equation}\label{eq:y-infty}
        \lim_{y\to\infty}
        Q(z_1,\ldots,z_n,(x,y),1)
        =
        1.
      \end{equation}
    \end{itemize}
  \item[R2]
    For any probability measure $P$ on $\mathbf{Z}$,
    as function of random training observations $z_1\sim P$,\ldots, $z_n\sim P$,
    a random test observation $z\sim P$,
    and a random number $\tau\sim U$,
    all assumed independent,
    the distribution of $Q$ is uniform:
    \begin{equation}\label{eq:R2}
      \forall \alpha\in[0,1]:
      \Prob
      \left(
        Q(z_1,\ldots,z_n,z,\tau)
        \le
        \alpha
      \right)
      =
      \alpha.
    \end{equation}
  \end{itemize}
\end{definition}
The output
\begin{equation}\label{eq:Q}
  y\in\R\mapsto Q(z_1,\ldots,z_n,(x,y),\tau)
\end{equation}
of an RPS on a given training sequence $z_1,\ldots,z_n$,
test object $x$, and random number $\tau$
will be referred to as a \emph{predictive distribution (function)}.

\section{Split conformal predictive systems}
\label{sec:SCPS}

In this section we will modify the definitions of conformal predictive systems
given in \cite{\OCMXVII} along the lines of \cite[Section 2.3]{Bala/etal:2014}
(removing an unnecessary assumption in \cite[Section 4.1]{Vovk/etal:2005book}).
A \emph{split conformity measure}
is a family of measurable functions $A_m:\mathbf{Z}^{m+1}\to\R\cup\{-\infty,\infty\}$, $m=1,2,\ldots$\,.
The intention is that $A_m(z_1,\ldots,z_{m+1})$ measures how large the label $y_{m+1}$ in $z_{m+1}$ is,
as compared with the labels in $z_1,\ldots,z_m$.
Suppose the training sequence $z_1,\ldots,z_n$ is split into two parts:
the \emph{training sequence proper} $z_1,\ldots,z_m$ and the \emph{calibration sequence} $z_{m+1},\ldots,z_n$;
we are given a test object $x$.
The output of the \emph{split conformal transducer} determined by the split conformity measure $A$ is defined as
\begin{multline}\label{eq:SCPS}
  Q(z_1,\ldots,z_n,(x,y),\tau)
  :=
  \frac{1}{n-m+1}
  \left|\left\{i=m+1,\ldots,n\mid\alpha_i<\alpha^y\right\}\right|\\
  +
  \frac{\tau}{n-m+1}
  \left|\left\{i=m+1,\ldots,n\mid\alpha_i=\alpha^y\right\}\right|
  +
  \frac{\tau}{n-m+1},
\end{multline}
where the \emph{conformity scores} $\alpha_i$, $i=m+1,\ldots,n$, and $\alpha^y$, $y\in\R$,
are defined by
\begin{equation*}
  \begin{aligned}
    \alpha_i
    &:=
    A(z_1,\ldots,z_m,(x_i,y_i)),
      \qquad i=m+1,\ldots,n,\\
    \alpha^y
    &:=
    A(z_1,\ldots,z_m,(x,y)).
  \end{aligned}
\end{equation*}
(We omit the lower index $m$ in $A_m$ since it is determined by the number of arguments.)
A function is a \emph{split conformal transducer}
if it is the split conformal transducer determined by some split conformity measure.
A \emph{split conformal predictive system} (SCPS) is a function which is both a split conformal transducer
and a randomized predictive system.

The standard property of validity (satisfied automatically) for split conformal transducers
is that the values $Q(z_1,\ldots,z_n,z,\tau)$ are distributed uniformly on $[0,1]$
when $z_1,\ldots,z_n,z$ are IID and $\tau$ is generated independently of $z_1,\ldots,z_n,z$
from the uniform probability distribution $U$ on $[0,1]$
(see, e.g., \cite[Proposition~4.1]{Vovk/etal:2005book}).

It is much easier to get an RPS using split conformal transducers than using conformal transducers.
A split conformity measure $A$ is \emph{isotonic} if, for all $m$, $z_1,\ldots,z_m$, and $x$,
$A(z_1,\ldots,z_m,(x,y))$ is isotonic in $y$, i.e.,
\begin{equation}\label{eq:isotonic}
  y \le y'
  \Longrightarrow
  A(z_1,\ldots,z_m,(x,y))
  \le
  A(z_1,\ldots,z_m,(x,y'))
\end{equation}
(cf.\ \cite{\OCMXVII}, the definition of monotonic conformity measures in Section~2).
An isotonic split conformity measure $A$ is \emph{balanced} if,
for any $m$ and $z_1,\ldots,z_m$,
the set
\begin{equation}\label{eq:image}
  \conv A(z_1,\ldots,z_m,(x,\R))
  :=
  \conv
  \left\{
    A(z_1,\ldots,z_m,(x,y))
    \mid
    y\in\R
  \right\}
\end{equation}
does not depend on $x$,
where $\conv$ stands for the convex closure in $\R$.
The set~\eqref{eq:image} then coincides with $\conv A(z_1,\ldots,z_m,\mathbf{Z})$
and has one of four forms:
$(a,b)$, $[a,b)$, $(a,b]$, or $[a,b]$,
where $a<b$ are elements of the extended real line $\R\cup\{-\infty,\infty\}$;
in this paper, we will be mainly interested in the case
$\conv A(z_1,\ldots,z_m,\mathbf{Z})=(-\infty,\infty)$.

\begin{proposition}
  The split conformal transducer \eqref{eq:SCPS}
  based on a balanced isotonic split conformity measure
  is an RPS.
\end{proposition}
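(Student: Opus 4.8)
The plan is to verify the three requirements R1(i), R1(ii), and R2 in turn, the first two reducing to direct inspection of the formula \eqref{eq:SCPS} under the isotonicity and balancedness hypotheses, and the last one being quoted from the standard validity result cited in the text.

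First I would treat R1(i). Monotonicity in $\tau$ is immediate from \eqref{eq:SCPS}, since every occurrence of $\tau$ is multiplied by a nonnegative quantity and all three terms have nonnegative coefficients; as $\tau$ ranges over $[0,1]$ the expression increases. For monotonicity in $y$, fix $\tau$ and note that the conformity score $\alpha^y=A(z_1,\ldots,z_m,(x,y))$ is, by isotonicity \eqref{eq:isotonic}, a monotonically increasing function of $y$. It then suffices to observe that the first two terms of \eqref{eq:SCPS}, viewed as a function of the real variable $t=\alpha^y$, namely
\[
  t \mapsto
  \frac{1}{n-m+1}
  \left|\left\{i\mid\alpha_i<t\right\}\right|
  +
  \frac{\tau}{n-m+1}
  \left|\left\{i\mid\alpha_i=t\right\}\right|,
\]
is itself monotonically increasing in $t$: increasing $t$ past a calibration score $\alpha_i$ moves that index from the ``$=$'' set (weight $\tau$) first into it and then into the ``$<$'' set (weight $1\ge\tau$), and in between the count in each set only changes in the favourable direction. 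Composing an increasing function of $t$ with the increasing map $y\mapsto\alpha^y$ gives the claim. A minor subtlety worth spelling out: ties between $\alpha^y$ and several $\alpha_i$, and plateaus of $y\mapsto\alpha^y$, are handled correctly precisely because the ``$=$'' term carries weight $\tau\le 1$, so no jump is ever overshot.

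Next, R1(ii): the two limit conditions. Here is where balancedness enters. Since $\conv A(z_1,\ldots,z_m,\mathbf{Z})=(-\infty,\infty)$ in the case of interest — and more generally since the convex closure \eqref{eq:image} does not depend on $x$ and contains the values actually taken — as $y\to-\infty$ the score $\alpha^y$ eventually drops strictly below every calibration score $\alpha_{m+1},\ldots,\alpha_n$ (or at least below their infimum behaviour), so for $\tau=0$ both counting terms vanish and $Q\to 0$; symmetrically, as $y\to\infty$, $\alpha^y$ eventually exceeds every $\alpha_i$, so with $\tau=1$ the first term becomes $\frac{n-m}{n-m+1}$, the ``$=$'' term vanishes, and the additive $\frac{\tau}{n-m+1}=\frac{1}{n-m+1}$ brings the total to $1$. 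I would state this carefully using that the $\alpha_i$ are finitely many fixed reals (or $\pm\infty$, in which case balancedness guarantees the one-sided open/closed structure makes the limits still come out right), so ``eventually below/above all of them'' is exactly what isotonicity plus the range condition deliver.

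Finally, R2 is not proved from scratch: it is exactly the standard uniformity property of split conformal transducers recalled in the paragraph preceding the proposition (citing \cite[Proposition~4.1]{Vovk/etal:2005book}), which holds for \emph{any} split conformity measure with no isotonicity or balancedness assumption. So the proof of R2 is a one-line appeal to that result. I expect the only real obstacle to be the bookkeeping in R1(ii) — getting the edge cases of balancedness right when $a$ or $b$ is finite or when the interval \eqref{eq:image} is half-open — but in the main case $(-\infty,\infty)$ it is routine, and since that is the case the paper says it cares about, the argument is short.
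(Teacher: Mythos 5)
Your proposal is correct and follows essentially the same route as the paper's proof: monotonicity in $\tau$ by inspection, monotonicity in $y$ via the observation that raising $\alpha^y$ only moves calibration indices from weight $0$ to weight $\tau$ to weight $1$ (the paper phrases this as a three-group bookkeeping, you as composing an increasing function of $t=\alpha^y$ with the isotonic map $y\mapsto\alpha^y$), the limits in R1(ii) from balancedness forcing $\alpha^y$ eventually beyond all calibration scores, and R2 quoted from the standard validity result. The only nitpick is that for \eqref{eq:y-minus-infty} you need the non-strict $\alpha^y\le\min_i\alpha_i$ eventually (strict may fail when the common interval is closed at its lower endpoint, since the counting condition is the strict $\alpha_i<\alpha^y$), which is exactly the non-strict form the paper uses and which your own hedging about half-open edge cases already anticipates.
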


\begin{proof}
  Since property R2 is automatic, we only need to check R1.
  It is clear that \eqref{eq:SCPS} is increasing in $\tau$ (and linear).

  To show that it is increasing in $y$, split, in the context of \eqref{eq:SCPS}, all $i\in\{m+1,\ldots,n\}$
  into three groups:
  the $i$ in group 1 satisfy $\alpha_i<\alpha^y$,
  the $i$ in group 2 satisfy $\alpha_i=\alpha^y$,
  and the $i$ in group 3 satisfy $\alpha_i>\alpha^y$.
  Then \eqref{eq:SCPS} is the total weight of all $i$ where the weights are $1$, $\tau\in[0,1]$, and $0$
  for $i$ in groups 1, 2, and 3, respectively.
  As $y$ increases, $\alpha^y$ increases as well,
  and therefore, each $i$ can only move to a lower-numbered group thus increasing \eqref{eq:SCPS}.

  Out of the remaining two conditions, let us check, e.g., \eqref{eq:y-infty}.
  It suffices to notice that, since $A$ is balanced,
  we have $\alpha^y\ge\max_{i\in\{m+1,\ldots,n\}}\alpha_i$ from some $y$ on, for any $z_1,\ldots,z_n$ and~$x$.
\end{proof}

The next proposition shows that a split conformity measure being isotonic and balanced
is not only a sufficient but also a necessary condition for the corresponding split conformal transducer to be an RPS.

\begin{proposition}
  If the split conformal transducer based on a split conformity measure $A$ is an RPS,
  $A$ is isotonic and balanced.
\end{proposition}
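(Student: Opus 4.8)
The plan is to prove the contrapositive: assuming $A$ is not isotonic, or is isotonic but not balanced, I would exhibit a probability measure $P$ on $\mathbf{Z}$ (typically supported on finitely many points, or even a single point for the balanced part) for which R1 fails or R2 fails, so that $Q$ is not an RPS. Since R2 is automatic for any split conformal transducer, the failure must come through R1; but R1 is a deterministic, pointwise condition on $Q$, so I need to turn a violation of isotonicity/balancedness of $A$ into a violation of the monotonicity or limit conditions \eqref{eq:y-minus-infty}--\eqref{eq:y-infty} for $Q$.

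First I would handle \textbf{isotonicity}. Suppose $A$ is not isotonic: there exist $m$, a training-proper sequence $z_1,\ldots,z_m$, an object $x$, and labels $y<y'$ with $A(z_1,\ldots,z_m,(x,y))>A(z_1,\ldots,z_m,(x,y'))$, i.e.\ $\alpha^y>\alpha^{y'}$. The idea is to choose a calibration sequence whose conformity scores ``witness'' this strict decrease. Concretely, pick $n=m+1$ (a single calibration point) and choose the calibration observation $z_{m+1}=(x_{m+1},y_{m+1})$ so that its score $\alpha_{m+1}=A(z_1,\ldots,z_m,(x_{m+1},y_{m+1}))$ lies strictly between $\alpha^{y'}$ and $\alpha^y$ — this is possible provided such a value is attainable as a conformity score, which one can arrange by the freedom in choosing $x_{m+1},y_{m+1}$ (if no strictly intermediate attainable value exists one can instead use the boundary case $\alpha_{m+1}=\alpha^y$ or $=\alpha^{y'}$ and play with $\tau$, or enlarge the calibration sequence). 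Then from \eqref{eq:SCPS} with $\tau$ fixed (say $\tau=0$), $Q(\ldots,(x,y),0)$ counts $z_{m+1}$ in group $3$ (score $>\alpha^y$ is false — wait, $\alpha_{m+1}<\alpha^y$ so it is in group 1), giving $Q=\tfrac12$, whereas $Q(\ldots,(x,y'),0)$ has $\alpha_{m+1}>\alpha^{y'}$, group 3, giving $Q=0$; so $Q$ strictly decreases from $y$ to $y'$, contradicting monotonicity in R1(i). I must be a little careful with the exact arithmetic of \eqref{eq:SCPS} and the $+\tau/(n-m+1)$ term, but with $\tau=0$ it disappears and the comparison is clean.

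Next I would handle \textbf{balancedness}, assuming $A$ is already isotonic. If $A$ is not balanced then $\conv A(z_1,\ldots,z_m,(x,\R))$ depends on $x$; in particular, either the supremum or the infimum of the attainable scores differs between two objects. Take the case where for some $x$ the supremum $b_x:=\sup_y A(z_1,\ldots,z_m,(x,y))$ is finite, i.e.\ $A(z_1,\ldots,z_m,(x,y))$ is bounded above in $y$. Build a training sequence of length $n$ whose training-proper part is $z_1,\ldots,z_m$ and whose calibration part contains at least one observation with conformity score $\geq b_x$ — such a calibration observation exists iff some other object attains a large score, which is exactly what non-balancedness of the ``upper end'' gives us (or, if the issue is only that the sup is not attained versus attained, one places a calibration score equal to $b_x$). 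For this fixed training sequence, as $y\to\infty$ we have $\alpha^y\le b_x$ always, so that calibration point never enters group $1$, hence the first term of \eqref{eq:SCPS} is at most $(n-m-1)/(n-m+1)<1$; with $\tau=1$ the remaining terms contribute at most... one has to check, but by choosing enough such ``blocking'' calibration points (or a single one with the boundary subtleties handled via $\tau$) one forces $\lim_{y\to\infty}Q(\ldots,(x,y),1)<1$, contradicting \eqref{eq:y-infty}. The symmetric argument with the infimum and \eqref{eq:y-minus-infty} handles the lower end. One subtlety worth flagging: the proposition as stated presumably implicitly assumes $\conv A(z_1,\ldots,z_m,\mathbf{Z})=(-\infty,\infty)$, or at least interprets ``balanced'' as including the normalization of the limit conditions; I would state at the outset exactly which reading I use, matching the discussion after \eqref{eq:image}.

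The main obstacle I expect is not conceptual but combinatorial bookkeeping: the $\tau$-dependent ties in \eqref{eq:SCPS} and the extra $+\tau/(n-m+1)$ term mean that a single calibration point is sometimes not enough to force a \emph{strict} violation — a near-violation can be masked because $Q$ can still creep up via the $\tau$ terms — so the real work is choosing the length of the calibration sequence and the multiplicities of the witnessing conformity scores so that the monotonicity or limit condition fails by a definite amount for \emph{every} $\tau\in[0,1]$ (or at least for $\tau=0$ or $\tau=1$, which suffices since R1 must hold for all $\tau$). Once the right finite configuration is pinned down, verifying the failure is a direct substitution into \eqref{eq:SCPS}.
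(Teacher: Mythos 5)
Your overall strategy coincides with the paper's: argue the contrapositive and exhibit a one-element calibration sequence on which R1 fails (R2 being automatic). However, your main-line construction for the isotonicity half has a real gap that the paper's choice of witness avoids. You propose to find a calibration observation whose conformity score lies \emph{strictly between} $\alpha^{y'}$ and $\alpha^y$; nothing guarantees such a score is attainable (the range of $A(z_1,\ldots,z_m,\cdot)$ could be a two-point set, say), so this step can fail outright. The fallback you mention only in passing --- ``use the boundary case and play with $\tau$'' --- is in fact the entire argument in the paper: take the single calibration observation to be $(x,y)$ itself, so that $\alpha_{m+1}=\alpha^y$ automatically, and evaluate at $\tau=1$, giving $Q=1$ at the test label $y$ versus $Q=1/2$ at $y'$ (equivalently, use $(x,y')$ and $\tau=0$, giving $1/2$ versus $0$). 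No search for an intermediate attainable value is needed, and no enlargement of the calibration sequence either.

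For balancedness your idea (a calibration score that blocks the limit conditions) is the right one, but the paper resolves the $\tau$/tie bookkeeping you leave unchecked by a sharper choice of witness: since the two sets $\conv A(z_1,\ldots,z_m,(x,\R))$ and $\conv A(z_1,\ldots,z_m,(x',\R))$ are intervals, if they differ then one object attains a score lying strictly below (or strictly above) \emph{every} score attainable by the other. Taking that single observation as the calibration sequence and the other object as the test object, the calibration point stays in group 1 for all labels (so $\lim_{y'\to-\infty}Q(\cdots,0)\ge 1/2>0$, violating \eqref{eq:y-minus-infty}), or stays in group 3 for all labels (so $\lim_{y\to\infty}Q(\cdots,1)\le 1/2<1$, violating \eqref{eq:y-infty}); strictness is what prevents the group-2 terms from rescuing $Q$ at $\tau=1$, which is exactly the failure mode you worry about when you place a calibration score merely equal to $b_x$. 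Finally, your concern that the proposition implicitly assumes $\conv A(z_1,\ldots,z_m,\mathbf{Z})=(-\infty,\infty)$ is unfounded: balancedness is only the $x$-independence of the hull, and the argument uses nothing more.
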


\begin{proof}
  Suppose $A$ is not isotonic.
  Fix $m$, $z_1,\ldots,z_m$, $x$, $y$, and $y'$ such that $y<y'$ but the consequent of \eqref{eq:isotonic} is violated.
  Then the putative predictive distribution $Q(z_1,\ldots,z_m,(x,y),(x,\cdot),1)$,
  corresponding to the training sequence proper $z_1,\ldots,z_m$, calibration sequence $(x,y)$, test object $x$, and $\tau=1$,
  will not be increasing:
  its value at $y$ (which is $1$) will be greater than its value at $y'$ (which is $0.5$).

  Now suppose $A$ is not balanced.
  Fix $m$, $z_1,\ldots,z_m$, and $x,x'\in\mathbf{X}$ such that
  \[
    \conv A(z_1,\ldots,z_m,(x,\R))
    \ne
    \conv A(z_1,\ldots,z_m,(x',\R))
  \]
  (cf.\ \eqref{eq:image}).
  Suppose, for concreteness, that there is $y\in\R$ such that
  \[
    \conv A(z_1,\ldots,z_m,(x,\R))
    \ni
    y
    <
    \conv A(z_1,\ldots,z_m,(x',\R)),
  \]
  where $y<S$ means $\forall s\in S:y<s$ when $S\subseteq\R$.
  (The other three possible cases can be analyzed in the same way.)
  Let the training sequence proper be $z_1,\ldots,z_m$,
  the calibration sequence be $(x,y)$,
  the test object be $x'$,
  and the random number be $\tau=0$.
  Then we will have
  \begin{equation*}
    \lim_{y'\to-\infty}
    Q(z_1,\ldots,z_m,(x,y),(x',y'),0)
    >
    0,
  \end{equation*}
  which contradicts R1 (cf.\ \eqref{eq:y-minus-infty}).
\end{proof}

\begin{algorithm}[bt]
  \caption{Split Conformal Predictive System}
  \label{alg:SCPS}
  \begin{algorithmic}
    \Require
      A training sequence $(x_i,y_i)\in\mathbf{Z}$, $i=1,\ldots,n$.
    \Require
      A test object $x\in\mathbf{X}$.
    \For{$i\in\{1,\ldots,n-m\}$}
      \State Define $C_i$ by the condition $A(z_1,\ldots,z_m,z_{m+i})=A(z_1,\ldots,z_m,(x,C_i))$.
    \EndFor
    \State Sort $C_1,\ldots,C_{n-m}$ in the increasing order obtaining $C_{(1)}\le\cdots\le C_{(n-m)}$.
    \State Set $C_{(0)}:=-\infty$ and $C_{(n-m+1)}:=\infty$.
    \State Return the predictive distribution \eqref{eq:Q-1} for the label $y$ of $x$.
  \end{algorithmic}
\end{algorithm}

Let us say that a split conformity measure $A$ is \emph{strictly isotonic}
if \eqref{eq:isotonic} holds with both ``$\le$'' replaced by ``$<$''.
A possible implementation of the SCPS based on a balanced strictly isotonic split conformity measure is shown as Algorithm~\ref{alg:SCPS},
where the predictive distribution is defined by
\begin{multline}\label{eq:Q-1}
  Q(z_1,\ldots,z_n,(x,y),\tau)
  :={}\\
  \begin{cases}
    \frac{i+\tau}{n-m+1} & \text{if $y\in(C_{(i)},C_{(i+1)})$ for $i\in\{0,1,\ldots,n-m\}$}\\[2mm]
    \frac{i'-1+(i''-i'+2)\tau}{n-m+1} & \text{if $y=C_{(i)}$ for $i\in\{1,\ldots,n-m\}$},
  \end{cases}
\end{multline}
where $i':=\min\{j\mid C_{(j)}=C_{(i)}\}$ and $i'':=\max\{j\mid C_{(j)}=C_{(i)}\}$.
To use the terminology of \cite{\OCMXVII},
the thickness of this predictive distribution is $\frac{1}{n-m+1}$
with the exception size at most $n-m$.

How computationally efficient Algorithm~\ref{alg:SCPS} is depends
on how easy to solve the equation defining $C_i$ is.
A standard choice of split conformity measure is
\begin{equation}\label{eq:example}
  A(z_1,\ldots,z_m,(x,y))
  :=
  \frac{y-\hat y}{\hat\sigma},
\end{equation}
where $\hat y$ is a prediction for the label $y$
computed from $x$ as test object and $z_1,\ldots,z_m$ as training sequence,
and $\hat\sigma$ is an estimate of the quality of $\hat y$ computed from the same data.
In this case the equation
\begin{equation}\label{eq:C_i}
  A(z_1,\ldots,z_m,z_{m+i})
  =
  A(z_1,\ldots,z_m,(x,C_i))
\end{equation}
defining $C_i$ becomes
\[
  \frac{y_{m+i} - \hat y_{m+i}}{\hat\sigma_{m+i}}
  =
  \frac{C_i - \hat y}{\hat\sigma},
\]
where $\hat y_{m+i}$ (resp.\ $\hat y$) is the prediction for $y_{m+i}$ (resp.\ $y$)
computed from $x_{m+i}$ (resp.\ $x$) as test object and $z_1,\ldots,z_m$ as training sequence,
and $\hat\sigma_{m+i}$ (resp.\ $\hat\sigma$) is the estimate of the quality of $\hat y_{m+i}$ (resp.\ $\hat y$)
computed from the same data.
The last equation allows us to set
\[
  C_i
  :=
  \hat y
  +
  \frac{\hat\sigma}{\hat\sigma_{m+i}}
  \left(
    y_{m+i} - \hat y_{m+i}
  \right).
\]
For more complicated split conformity measures $A$,
it might be more efficient to use the expression~\eqref{eq:SCPS} directly
for a grid of values of $y$.

\section{Cross-conformal predictive distributions}
\label{sec:CCPS}

Remember that a \emph{multiset} (or bag) is different from a set in that it can contain several copies of the same element.
A split conformity measure $A$ is a \emph{cross-conformity  measure}
if $A(z_1,\ldots,z_m,z)$ does not depend on the order of its first $m$ arguments;
in other words, if $A(z_1,\ldots,z_m,z)$ only depends on the multiset $\lbag z_1,\ldots,z_m\rbag$ and $z$
(where $\lbag\cdots\rbag$ is used as the analogue of $\{\cdots\}$ for multisets).

Given a balanced isotonic cross-conformity measure $A$,
the corresponding \emph{cross-conformal predictive system} (CCPS) is defined as follows.
The training sequence $z_1,\ldots,z_n$ is randomly split into $K$ non-empty multisets (\emph{folds})
$z_{S_k}$, $k=1,\ldots,K$, of equal (or as equal as possible) sizes,
where $K\in\{2,3,\ldots\}$ is a parameter of the algorithm,
$(S_1,\ldots,S_K)$ is a partition of the index set $\{1,\ldots,n\}$,
and $z_{S_k}$ consists of all $z_i$, $i\in S_k$.
For each $k\in\{1,\ldots,K\}$ and each potential label $y\in\R$ of the test object $x$,
find the conformity scores of the observations in $z_{S_k}$ and of $(x,y)$ by
\begin{equation*}
  \alpha_{i,k} := A(z_{S_{-k}},z_i),
  \quad
  i\in S_k,
  \qquad
  \alpha^y_k := A(z_{S_{-k}},(x,y)),
\end{equation*}
where $S_{-k}:=\cup_{j\ne k}S_j=\{1,\ldots,n\}\setminus S_k$.
The corresponding p-values and CCPS are defined by
\begin{multline}\label{eq:CCPS}
  p^y
  =
  Q(z_1,\ldots,z_n,(x,y),\tau)
  :=
  \frac{1}{n+1}
  \sum_{k=1}^K\left|\left\{i\in S_k\mid\alpha_{i,k}<\alpha^y_k\right\}\right|\\
  +
  \frac{\tau}{n+1}
  \sum_{k=1}^K\left|\left\{i\in S_k\mid\alpha_{i,k}=\alpha^y_k\right\}\right|
  +
  \frac{\tau}{n+1}.
\end{multline}
The intuition behind \eqref{eq:CCPS} is that it becomes an SCPS when the training multisets $z_{S_{-k}}$
are replaced by a single hold-out training sequence (one disjoint from and independent of $z_1,\ldots,z_n$).

\begin{algorithm}[bt]
  \caption{Cross-Conformal Predictive System}
  \label{alg:CCPS}
  \begin{algorithmic}
    \Require
      A training sequence $(x_i,y_i)\in\mathbf{Z}$, $i=1,\ldots,n$.
    \Require
      A test object $x\in\mathbf{X}$.
    \State Split $z_1,\ldots,z_n$ into $K$ folds $z_{S_k}$ as described in text.
    \State Set $C:=\emptyset$, where $C$ is a multiset.
    \For{$k\in\{1,\ldots,K\}$}
      \For{$i\in S_k$}
        \State Define $C_{i,k}$ by the condition $A(z_{S_{-k}},z_{i})=A(z_{S_{-k}},(x,C_{i,k}))$.
        \State Put $C_{i,k}$ in $C$.
      \EndFor
    \EndFor
    \State Sort $C$ in the increasing order obtaining $C_{(1)}\le\cdots\le C_{(n)}$.
    \State Set $C_{(0)}:=-\infty$ and $C_{(n+1)}:=\infty$.
    \State Return the predictive distribution \eqref{eq:Q-2} for the label $y$ of $x$.
  \end{algorithmic}
\end{algorithm}

An implementation of the CCPS based on a balanced strictly isotonic cross-conformity measure
is shown as Algorithm~\ref{alg:CCPS},
where the predictive distribution is now defined by
\begin{multline}\label{eq:Q-2}
  Q(z_1,\ldots,z_n,(x,y),\tau)
  := {}\\
  \begin{cases}
    \frac{i+\tau}{n+1} & \text{if $y\in(C_{(i)},C_{(i+1)})$ for $i\in\{0,1,\ldots,n\}$}\\[2mm]
    \frac{i'-1+(i''-i'+2)\tau}{n+1} & \text{if $y=C_{(i)}$ for $i\in\{1,\ldots,n\}$},
  \end{cases}
\end{multline}
where, as before, $i':=\min\{j\mid C_{(j)}=C_{(i)}\}$ and $i'':=\max\{j\mid C_{(j)}=C_{(i)}\}$;
the only difference from \eqref{eq:Q-1} is that we use $n$ in place of $n-m$
(now all training observations are used for calibration).
The thickness of this predictive distribution is $\frac{1}{n+1}$ with the exception size at most $n$.
The size of the multiset $C$ in Algorithm~\ref{alg:CCPS} grows from $0$ to $n$ as the algorithm runs.
As in the case of SCPS, it might be easier to use \eqref{eq:CCPS} directly
if the equations defining $C_{i,k}$ are difficult to solve.
(Alternatively, one could use \eqref{eq:modified-mean} below instead of \eqref{eq:CCPS}.)

Define a separate p-value
\begin{multline}\label{eq:CCPS-separate}
  p^y_k
  :=
  \frac{1}{\left|S_k\right|+1}
  \left|\left\{i\in S_k\mid\alpha_{i,k}<\alpha^y_k\right\}\right| \\
  +
  \frac{\tau}{\left|S_k\right|+1}
  \left|\left\{i\in S_k\mid\alpha_{i,k}=\alpha^y_k\right\}\right|
  +
  \frac{\tau}{\left|S_k\right|+1}
\end{multline}
for each fold
(cf.\ \eqref{eq:SCPS});
let us check that $p^y$ is close to being an average of $p^y_k$.
Comparing~\eqref{eq:CCPS} and~\eqref{eq:CCPS-separate}, we can see that
\[
  (n+1) p^y
  -
  \tau
  =
  \sum_{k=1}^K
  \left(
    \left|S_k\right|+1
  \right)
  p^y_k
  -
  K\tau,
\]
which implies
\begin{equation}\label{eq:modified-mean}
  p^y
  =
  \sum_{k=1}^K
  \frac{\left|S_k\right|+1}{n+1}
  p^y_k
  -
  \frac{K-1}{n+1}\tau.
\end{equation}
The sum $\sum_{k=1}^K\dots$ is not quite a weighted average of $p^y_k$
since the sum of the weights is slightly above 1
(``slightly'' assumes $K\ll n$),
but this is partially compensated by the subtrahend in \eqref{eq:modified-mean};
overall, the right-hand side of~\eqref{eq:modified-mean} is a weighted average of $p^y_k$ and $\tau$,
with the weight in front of $\tau$ being negative.

According to the intuition behind cross-conformal predictive distributions described earlier,
we will get perfect validity for CCPS
if we replace the $K$ training multisets (the complements to the $K$ folds) by one hold-out training sequence.
But whereas SCPS are provably valid, in the sense of being RPS,
real CCPS are not RPS: see the example in \cite[Appendix A]{\OCMVI}.
In experimental studies,
this phenomenon has been demonstrated by \cite{Linusson/etal:2017},
who showed the danger of randomized and extremely unstable underlying algorithms.
(Perhaps such unstable algorithms might be stabilized, to some degree,
by using the same seed of the random numbers generator for each fold,
or by averaging conformity scores over several seeds, or both.)
A useful intuition \cite{Linusson/etal:2017} is that the random p-values coming from different folds
(and then essentially averaged by cross-conformal predictors)
are to some degree independent,
and so the distribution of cross-conformal p-values is intermediate between the uniform and the Bates distributions;
therefore, cross-conformal p-values are conservative when not exact
(for small significance levels).
According to a result in \cite{\OCMXXI} (see, e.g., Table~1 for $r:=1$),
we will get provably valid (but perhaps conservative) p-values
if we multiply the p-values output by a cross-conformal transducer by 2;
the empirical fact observed by \cite{Linusson/etal:2017}
is that for randomized and unstable underlying algorithms
even unadjusted p-values output by a cross-conformal transducer are valid but perhaps overly conservative
for interesting (not exceeding $0.5$) significance levels.

A more general procedure than the cross-conformal predictor was proposed in \cite{Carlsson/etal:2014}
under the name of ``aggregated conformal predictor''.
Similar methods might be applicable for producing conformal predictive distributions.

\section{Continuous ranked probability score}
\label{sec:CRPS}

Suppose the prediction for a label $y\in\R$ is a distribution function $F:\R\to[0,1]$ and the observed value of $y$ is $y_i$.
The quality of the prediction $F$ in view of the actual outcome $y_i$ is often measured
by the \emph{continuous ranked probability score}
\begin{equation}\label{eq:CRPS}
  \CRPS(F,y_i)
  :=
  \int_{-\infty}^{\infty}
  \left(
    F(y)
    -
    \II_{\{y\ge y_i\}}
  \right)^2
  \dd y,
\end{equation}
where $\II$ stands for the indicator function.
The lowest possible value 0 is attained when $F$ is concentrated at $y_i$,
and in all other cases $\CRPS(F,y_i)$ will be positive.
(See, e.g., \cite{Gneiting/Katzfuss:2014} for further details and references.)

\begin{figure}
  \begin{center}
    \includegraphics[width=0.8\textwidth]{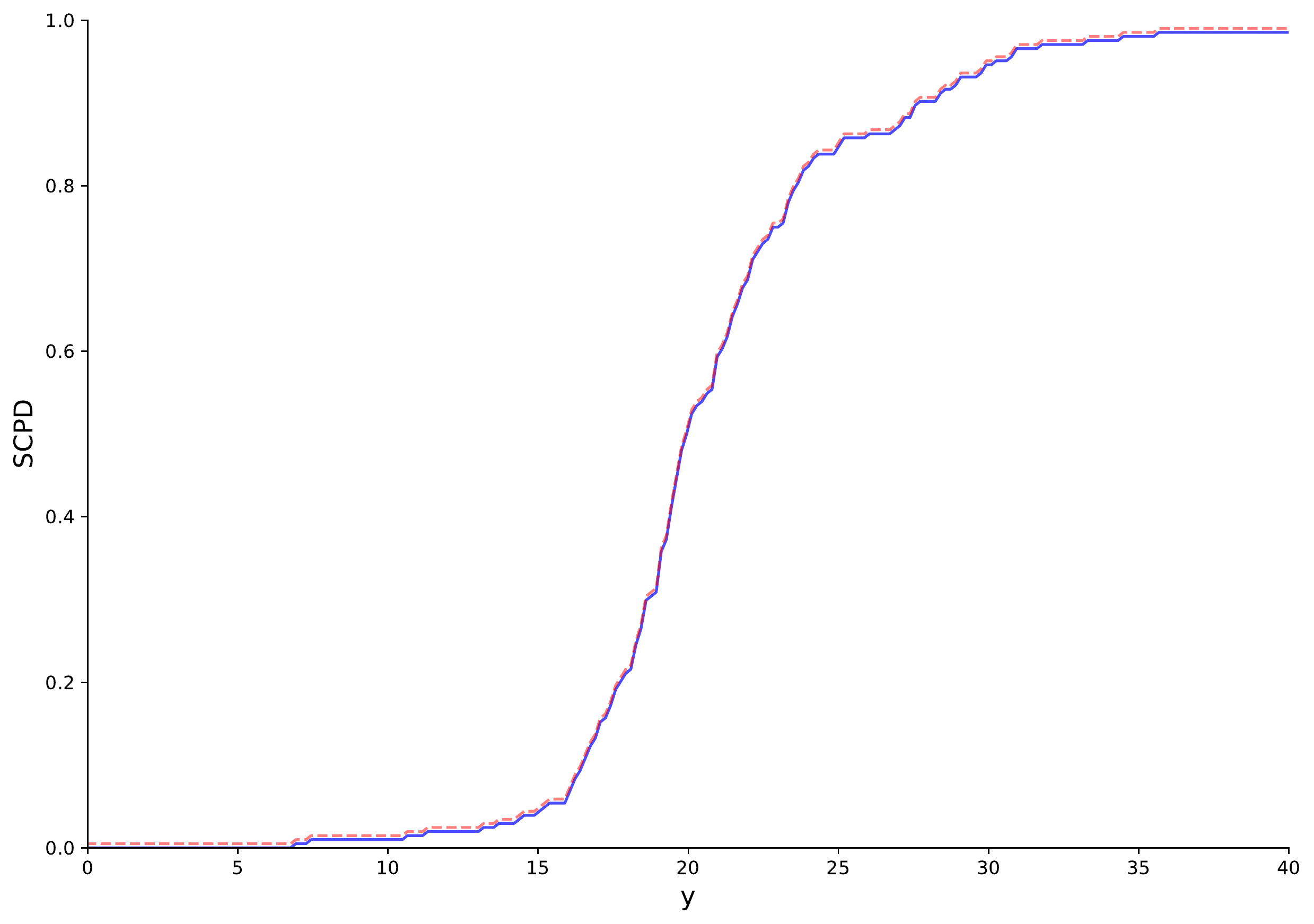}
  \end{center}
  \caption{The split conformal predictive distribution for a random test object in the \texttt{Boston Housing} dataset
    (described in Section~\ref{sec:experiments}),
    the Least Squares underlying algorithm, and a random $50\%:50\%$ split of the training sequence
    into proper training and calibration sequences.
    The blue solid line corresponds to $\tau = 0$ and the red dashed line to $\tau = 1$.}
  \label{fig:boston_random}
\end{figure}

Strictly speaking, \eqref{eq:CRPS} is not applicable
to split and cross-conformal predictive distributions, which are somewhat ``fuzzy''
(the thickness for the former is $\frac{1}{n-m+1}$ and for the latter it is $\frac{1}{n+1}$).
In practice, the fuzziness can usually be ignored,
even for relatively small datasets: see, e.g., Figure~\ref{fig:boston_random}.
However, conceptually we do need to change split and cross-conformal predictive distributions slightly
to remove their fuzziness.

Instead of \eqref{eq:Q-1} and \eqref{eq:Q-2}
we use their crisp modifications
\begin{equation}\label{eq:Q-1-crisp}
  Q(z_1,\ldots,z_n,(x,y))
  :=
  \begin{cases}
    \frac{i}{n-m} & \text{if $y\in(C_{(i)},C_{(i+1)})$ for $i\in\{0,1,\ldots,n-m\}$}\\[1mm]
    \frac{i}{n-m} & \text{if $y=C_{(i)}$ and $y\ne C_{(i+1)}$ for $i\in\{1,\ldots,n-m\}$}
  \end{cases}
\end{equation}
and
\begin{equation}\label{eq:Q-2-crisp}
  Q(z_1,\ldots,z_n,(x,y))
  :=
  \begin{cases}
    \frac{i}{n} & \text{if $y\in(C_{(i)},C_{(i+1)})$ for $i\in\{0,1,\ldots,n\}$}\\[1mm]
    \frac{i}{n} & \text{if $y=C_{(i)}$ and $y\ne C_{(i+1)}$ for $i\in\{1,\ldots,n\}$},
  \end{cases}
\end{equation}
respectively;
these modifications no longer depend on $\tau$,
and the convention for $y=C_{(i)}$ does not affect the value of CRPS.
In cases where the equation~\eqref{eq:C_i} or its analogue for the CCPS are difficult to solve,
we can instead use the following crisp modifications of~\eqref{eq:SCPS} and~\eqref{eq:CCPS}, respectively:
\begin{align*}
  Q(z_1,\ldots,z_n,(x,y))
  &:=
  \frac{1}{n-m}
  \left|\left\{i=m+1,\ldots,n\mid\alpha_i\le\alpha^y\right\}\right|,\\
  Q(z_1,\ldots,z_n,(x,y))
  &:=
  \frac{1}{n}
  \sum_{k=1}^K\left|\left\{i\in S_k\mid\alpha_{i,k}\le\alpha^y_k\right\}\right|.
\end{align*}
The last equation, defining a crisp CCPS,
can be rewritten as
\begin{equation*}
  Q(z_1,\ldots,z_n,(x,y))
  =
  \sum_{k=1}^K
  \frac{\left|S_k\right|}{n}
  p^y_k
\end{equation*}
(cf.\ \eqref{eq:modified-mean}),
where the separate ``p-values'' for each fold are now defined as
\begin{equation*}
  p^y_k
  :=
  \frac{1}{\left|S_k\right|}
  \left|\left\{i\in S_k\mid\alpha_{i,k}\le\alpha^y_k\right\}\right|
\end{equation*}
(they, however, do not satisfy any validity properties).

\section{Experiments}
\label{sec:experiments}

The purpose of this section is to compare the predictive performance of SCPS and CCPS
and to recommend the choice of the parameter $K$ for CCPS.
Our choice of conformity measures might well be improved in future work
(cf.\ Section~\ref{sec:univ}).

In our experiments we use five well-known benchmark datasets,
namely \texttt{Boston Housing}, \texttt{Diabetes}, \texttt{Yacht Hydrodynamics},
\texttt{Wine Quality}, and \texttt{Condition Based Maintenance of Naval Propulsion Plants}
(abbreviated to \texttt{Naval Propulsion})
available at
\href{http://scikit-learn.org/stable/datasets/}{http://scikit-learn.org/stable/datasets/}
(the first two)
and the UCI Machine Learning repository \cite{UCI:2017}
(the other ones).
The first three datasets are small:
\texttt{Boston Housing} consists of 506 observations,
\texttt{Diabetes} of 442 observations,
and \texttt{Yacht Hydrodynamics} of 308 observations;
for them we use test sequences of length $l:=100$.
The \texttt{Wine Quality} dataset consists of 6497 observations,
and we use test sequences of length $l:=1000$.
Finally, the \texttt{Naval Propulsion} dataset consists of 11,934 observations,
and we use test sequences of length $l:=4000$.

\begin{algorithm}[bt]
  \caption{Experiments in Section~\ref{sec:experiments}}
  \label{alg:experiments}
  \begin{algorithmic}[1]
    \Require
      A dataset of size $n+l$ consisting of observations $z=(x,y)$, $y\in\R$.
    \For{$U\in\{\text{LS},\text{RF},\text{NN}\}$}\label{ln:U-1}
      \For{$\alpha\in\{0.01,0.05,\dots,0.99\}$}
        \State Create an empty multiset $B_{U,\alpha}$.
      \EndFor
    \EndFor
    \For{$U\in\{\text{LS},\text{RF},\text{NN}\}$}\label{ln:U-2}
      \For{$K\in\{2,3,\dots,100\}$}
        \State Create an empty multiset $B'_{U,K}$.
      \EndFor
    \EndFor
    \For{10 times}
      \State Randomly permute the dataset obtaining a sequence $z_1,\dots,z_{n+l}$.
      \State Use $z_1,\dots,z_n$ as the training and $z_{n+1},\dots,z_{n+l}$ as the test sequence.
      \State Apply feature scaling by fitting on the training sequence and\AI transforming the training and test sequences.
      \State Tune the parameters using 3-fold cross-validation on the training\label{ln:tune-1}\AI sequence.\label{ln:tune-2}
      \For{$\alpha\in\{0.01,0.05,\dots,0.99\}$}
        \State $m:=\lfloor\alpha n\rfloor$.
        \For{$U\in\{\text{LS},\text{RF},\text{NN}\}$}\label{ln:U-3}
          \State Train an SCPS based on $U$ using $z_1,\dots,z_m$ as training sequence\AI proper and $z_{m+1},\dots,z_n$ as calibration sequence.
          \State Put all $\CRPS(F_i,y_i)$, $i\in\{n+1,\ldots,n+l\}$ in $B_{U,\alpha}$, where $F_i$ is\AI the output of the SCPS for $x_i$.
        \EndFor
      \EndFor
      \For{$K\in\{2,3,\dots,100\}$}
        \State Put all $z_i$, $i\in\{\lceil(k-1)n/K\rceil+1,\lceil k n/K\rceil\}$, into fold $k$, $k\in\{1,\dots,K\}$.
        \For{$U\in\{\text{LS},\text{RF},\text{NN}\}$}\label{ln:U-4}
          \State Train a CCPS based on $U$ using these folds.
          \State Put all $\CRPS(F_i,y_i)$, $i\in\{n+1,\ldots,n+l\}$ in $B'_{U,K}$, where $F_i$ is\AI the output of the CCPS for $x_i$.
        \EndFor
      \EndFor
    \EndFor
    \For{$U\in\{\text{LS},\text{RF},\text{NN}\}$}\label{ln:U-5}
      \For{$\alpha\in\{0.01,0.05,\dots,0.99\}$}
        \State Show the multiset $B_{U,\alpha}$ as boxplot.
      \EndFor
    \EndFor
    \For{$U\in\{\text{LS},\text{RF},\text{NN}\}$}\label{ln:U-6}
      \For{$K\in\{2,3,\dots,100\}$}
        \State Show the multiset $B'_{U,K}$ as boxplot.
      \EndFor
    \EndFor
  \end{algorithmic}
\end{algorithm}

Given a training sequence $(z_1,\ldots,z_n)$
(where $n\in\{406,342,208,5497,7934\}$)
and a test sequence $(z_{n+1},\ldots,z_{n+l})$,
the quality of prediction is represented
by the distribution of $\CRPS(F_i,y_i)$, $i=n+1,\ldots,n+l$,
where $F_i$ is the predictive distribution for the label $y_i$ of the test object $x_i$.
As already mentioned, the length $l$ of the test sequence is 100, 1000, or 4000 in our experiments.

In order to obtain boxplots less affected by the split of each dataset
into a training and test sequence
and by the random split of each training sequence
into a training sequence proper and a calibration sequence (in the case of SCPS)
or $K$ folds (in the case of CCPS),
we use the procedure given as Algorithm~\ref{alg:experiments}.
Each dataset is randomly permuted 10 times.
The last $l$ observations of each permutation are used for testing and the rest for training.
The first $m$ observations in the training sequence are used as training sequence proper
in the case of SCPS
and consecutive blocks of the training sequence are used as the $K$ folds
in the case of CCPS
(using the \texttt{scikit-learn} \texttt{KFold} procedure with no randomization).
The boxplots in all figures given below
are indexed by the fractions $m/n$ of the training sequence
used as the training sequence proper (in the case of SCPS)
or by the numbers $K$ of folds (in the case of CCPS).
For each split and each boxplot we find the $l$ values $\CRPS(F_i,y_i)$
for all test observations (the same test sequence is used for each split);
the resulting boxplot is based on all $10\,l$ numbers.

The loops in lines \ref{ln:U-1}, \ref{ln:U-2}, \ref{ln:U-3}, \ref{ln:U-4}, \ref{ln:U-5}, and \ref{ln:U-6}
of Algorithm~\ref{alg:experiments} are over our underlying algorithms $U$
(Least Squares, Random Forest, and Neural Networks,
as implemented in \texttt{scikit-learn}).
In all cases the SCPS and CCPS use the cross-conformity measure
(a special case of \eqref{eq:example})
\begin{equation}\label{eq:example-special}
  A(z_1,\ldots,z_m,(x,y))
  :=
  y-\hat y,
\end{equation}
where $\hat y$ is the prediction computed using the underlying algorithm $U$
for the label of $x$
based on $z_1,\ldots,z_m$ as training sequence.
(Remember that each cross-conformity measure is also a split conformity measure.)
Similarly to the CPS based on Least Squares \cite{\OCMXVII} and Kernel Ridge Regression \cite{\OCMXX}
(as discussed above),
this procedure is far from universal and can be expected to be efficient only for data
that is not too far from being homoscedastic;
this will be further discussed in Section~\ref{sec:univ}
(see, in particular, Proposition~\ref{prop:simple}).

Notice that, when implemented as in Algorithm~\ref{alg:experiments},
the SCPS is no longer provably calibrated
(because parameter tuning in lines~\ref{ln:tune-1}--\ref{ln:tune-2}
depends on the full training sequence),
and this is why we also check its validity in our experiments.
To check the validity of both SCPS and CCPS,
we run Algorithm~\ref{alg:experiments} replacing $\CRPS(F_i,y_i)$ with $F_i(y_i)$
and replacing boxplots with plots,
such as those in Figure~\ref{fig:calibration_boston}
(described in detail at the end of this section).

\begin{figure}
  \begin{center}
    \includegraphics[width=0.99\textwidth]{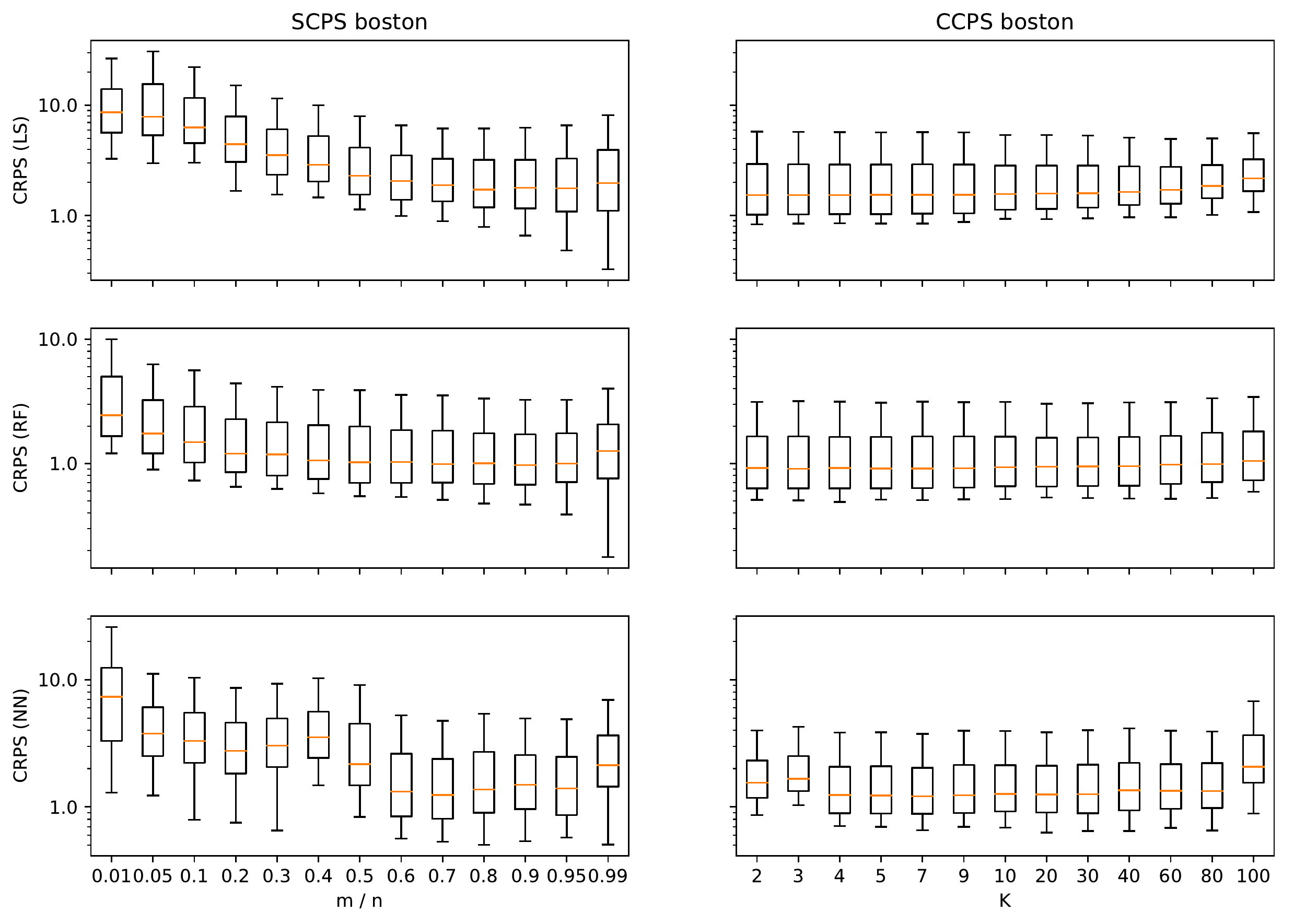}
  \end{center}
  \caption{The performance of the SCPS (left panel) and CCPS (right panel) on the \texttt{Boston Housing} dataset
    using Least Squares (LS), Random Forest (RF), and Neural Networks (NN) as the underlying algorithms,
    as indicated on the left.
    The vertical axis uses the log scale and gives the CRPS.
    Left panel: the numbers on the horizontal axis are the fractions $m/n$ of the training sequence
    used as the training sequence proper.
    Right panel: the numbers on the horizontal axis are the numbers $K$ of folds.}
  \label{fig:boston}
\end{figure}

The \texttt{Boston Housing} dataset consists of 506 observations
each with 14 attributes (describing an area of Boston)
and a real-valued label (median house price in that area).
Figure~\ref{fig:boston} shows the performance of the SCPS and CCPS.

The horizontal axis in the left panel is labelled by $\alpha\approx m/n$;
the values of $\alpha$ used in our experiments
are between $0.1$ and $0.9$, plus a few more extreme values.
For a given value of $\alpha$ we set $m:=\lfloor\alpha n\rfloor$.
The CRPS loss is computed for the (crisp) SCPS based on \eqref{eq:example-special}
and the three underlying algorithms
on each observation in the test sequence;
as described above,
we then represent the resulting 1000 CRPS losses as a boxplot.
We can see a characteristic U-shape (especially pronounced on the left);
small $m/n$ lead to a significant increase in the CRPS loss,
and large $m/n$ lead to a slight increase in the CRPS loss
but a significant increase in its variability
(the rightmost box and its whiskers tend to be longer).

The right panel of Figure~\ref{fig:boston} is similar to the left panel,
but now we use the CCPS and label the horizontal axis by the number $K$ of folds.
The usual advice in cross validation is to use $K\in\{5,10\}$,
and these two values produce reasonable results.
In fact, the results are remarkably stable and barely depend on $K$.

\begin{figure}[bt]
  \begin{center}
    \includegraphics[width=0.99\textwidth]{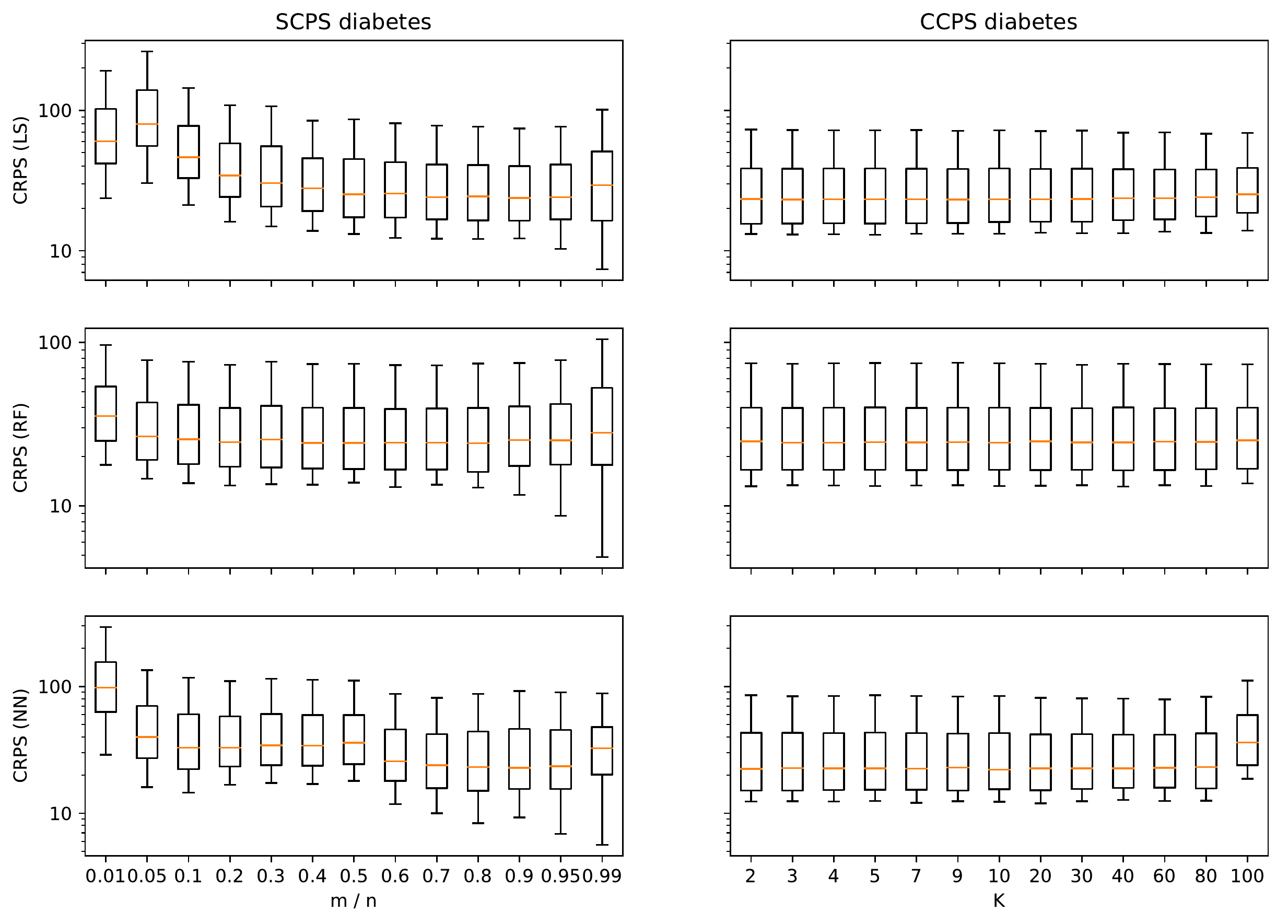}
  \end{center}
  \caption{The analogue of Figure~\ref{fig:boston} for the \texttt{Diabetes} dataset.}
  \label{fig:diabetes}
\end{figure}

The \texttt{Diabetes} dataset consists of 10 physiological measures on 442 patients,
and the label indicates disease progression after one year.
Figure~\ref{fig:diabetes} is the analogue of Figure~\ref{fig:boston}
for this dataset.
We can see the same tendencies,
with $K\in\{5,10\}$ still being reasonable numbers of folds for CCPS.

\begin{figure}[bt]
  \begin{center}
    \includegraphics[width=0.99\textwidth]{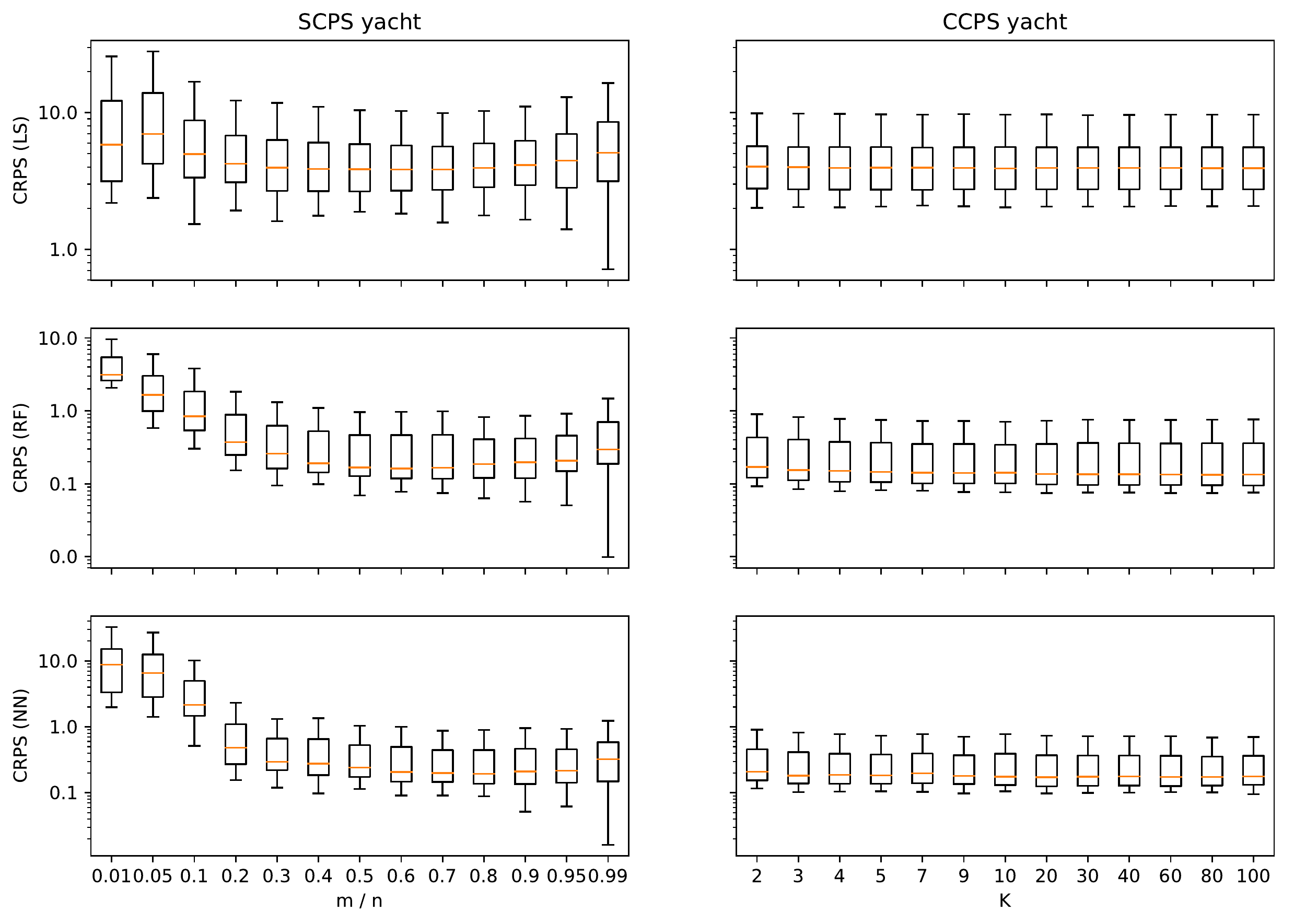}
  \end{center}
  \caption{The analogue of Figure~\ref{fig:boston} for the \texttt{Yacht Hydrodynamics} dataset.}
  \label{fig:yacht}
\end{figure}

The \texttt{Yacht Hydrodynamics} is the smallest of our datasets.
It consists of 7 attributes including the basic hull dimensions and the boat velocity for 308 experiments,
and the task is to predict the residuary resistance of sailing yachts.
Figure~\ref{fig:yacht} suggests that the behavior shown in Figures~\ref{fig:boston} and~\ref{fig:diabetes}
is in fact typical of small datasets.

\begin{figure}[bt]
  \begin{center}
    \includegraphics[width=0.99\textwidth]{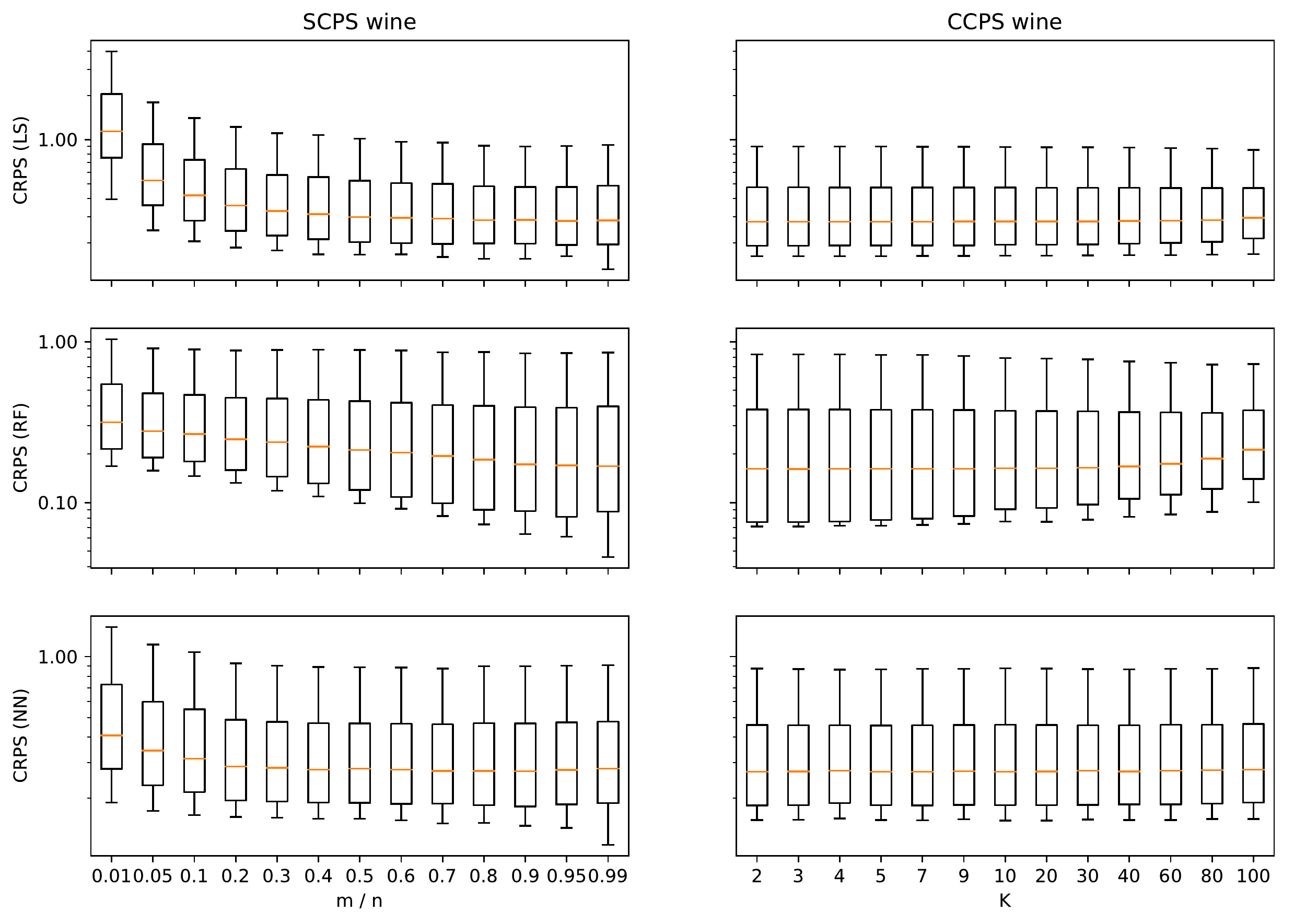}
  \end{center}
  \caption{The analogue of Figure~\ref{fig:boston} for the \texttt{Wine Quality} dataset.}
  \label{fig:wine}
\end{figure}

The \texttt{Wine Quality} dataset has information about 1599 red wines and 4898 white wines.
We merge these two groups creating another attribute taking two values,
0 for white and 1 for red.
The label is the quality of wine expressed as a score between 0 and 10.
(The most common labels are 5 and 6, labels 3 and 9 are very uncommon,
and labels 0 and 1 are absent.)

Figure~\ref{fig:wine} is qualitatively similar to Figures~\ref{fig:boston} and~\ref{fig:diabetes}.
The shape of the plots for SCPS suggests that we need a reasonable length $n-m$ of the calibration sequence,
such as 100 or 200,
since it determines the granularity of the predictive distributions:
as we have already mentioned in connection with \eqref{eq:Q-1},
the thickness of the predictive distribution is $\frac{1}{n-m+1}$.
Increasing the length of the calibration sequence further
does not improve the predictive performance significantly,
and starts hurting it when the training sequence proper becomes too short.

\begin{figure}[bt]
  \begin{center}
    \includegraphics[width=0.99\textwidth]{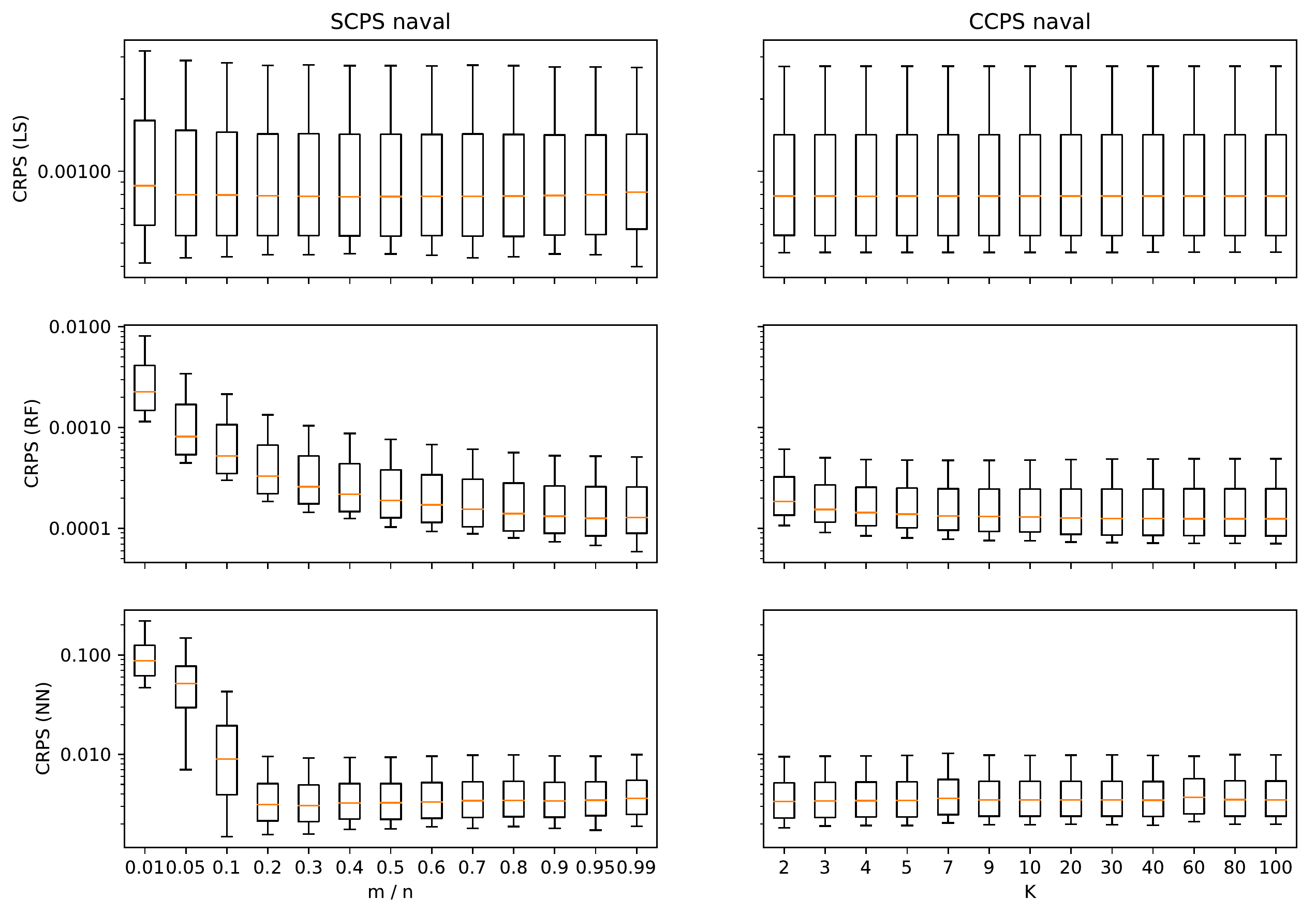}
  \end{center}
  \caption{The analogue of Figure~\ref{fig:boston} for the \texttt{Naval Propulsion} dataset.}
  \label{fig:naval}
\end{figure}

Figure~\ref{fig:naval} reports the results for the largest dataset that we use,
\texttt{Naval Propulsion}.
It contains information about 11,934 simulated experiments,
each described by 16 attributes,
and the task is to predict the Gas Turbine Compressor decay state coefficient for a propulsion plant.
Here we observe the same general behavior.

\begin{table}[bt]
  \caption{Best results for the median CRPS loss for SCPS and CCPS for the five datasets and three underlying algorithms.}

  \medskip

  \begin{center}
    \begin{tabular}{llrr}
      Dataset & underlying algorithm & SCPS    & CCPS \\
      \hline
      Boston Housing & Least Squares  & 1.726  & 1.533 \\
      Boston Housing & Random Forest  & 0.972  & 0.906 \\
      Boston Housing & Neural Network & 1.240  & 1.211 \\
      Diabetes & Least Squares        & 23.74 & 23.18 \\
      Diabetes & Random Forest        & 24.23 & 24.33 \\
      Diabetes & Neural Network       & 22.76 & 22.10 \\
      Yacht Hydrodynamics & Least Squares   & 3.840 & 3.910 \\
      Yacht Hydrodynamics & Random Forest        & 0.1615 & 0.1322 \\
      Yacht Hydrodynamics & Neural Network       & 0.1944 & 0.1725 \\
      Wine Quality & Least Squares    & 0.2810  & 0.2771 \\
      Wine Quality & Random Forest    & 0.1681  & 0.1618 \\
      Wine Quality & Neural Network   & 0.2711  & 0.2693 \\
      Naval Propulsion & Least Squares    & 0.0007812  & 0.0007866 \\
      Naval Propulsion & Random Forest    & 0.0001259  & 0.0001242 \\
      Naval Propulsion & Neural Network   & 0.003051 & 0.003360
    \end{tabular}
  \end{center}
  \label{tab:results}
\end{table}

The best results presented in Figures~\ref{fig:boston}--\ref{fig:naval}
are summarized in Table~\ref{tab:results}.
Namely, the table reports the median CRPS losses shown
in Figures~\ref{fig:boston}--\ref{fig:wine}
obtained by optimizing the parameters $m/n$ in the case of SCPS and $K$ in the case of CCPS.
In the majority of cases CCPS perform better than SCPS.
But what is even more important,
CCPS are much less sensitive to choosing their parameter $K$,
and so the best results given in Table~\ref{tab:results} are in fact typical for them.
In all our experiments, it is safe to choose any of the standard values for the number $K$ of folds
in the range from 5 to 10.

\begin{figure}[bt]
  \begin{center}
    \includegraphics[width=0.99\textwidth]{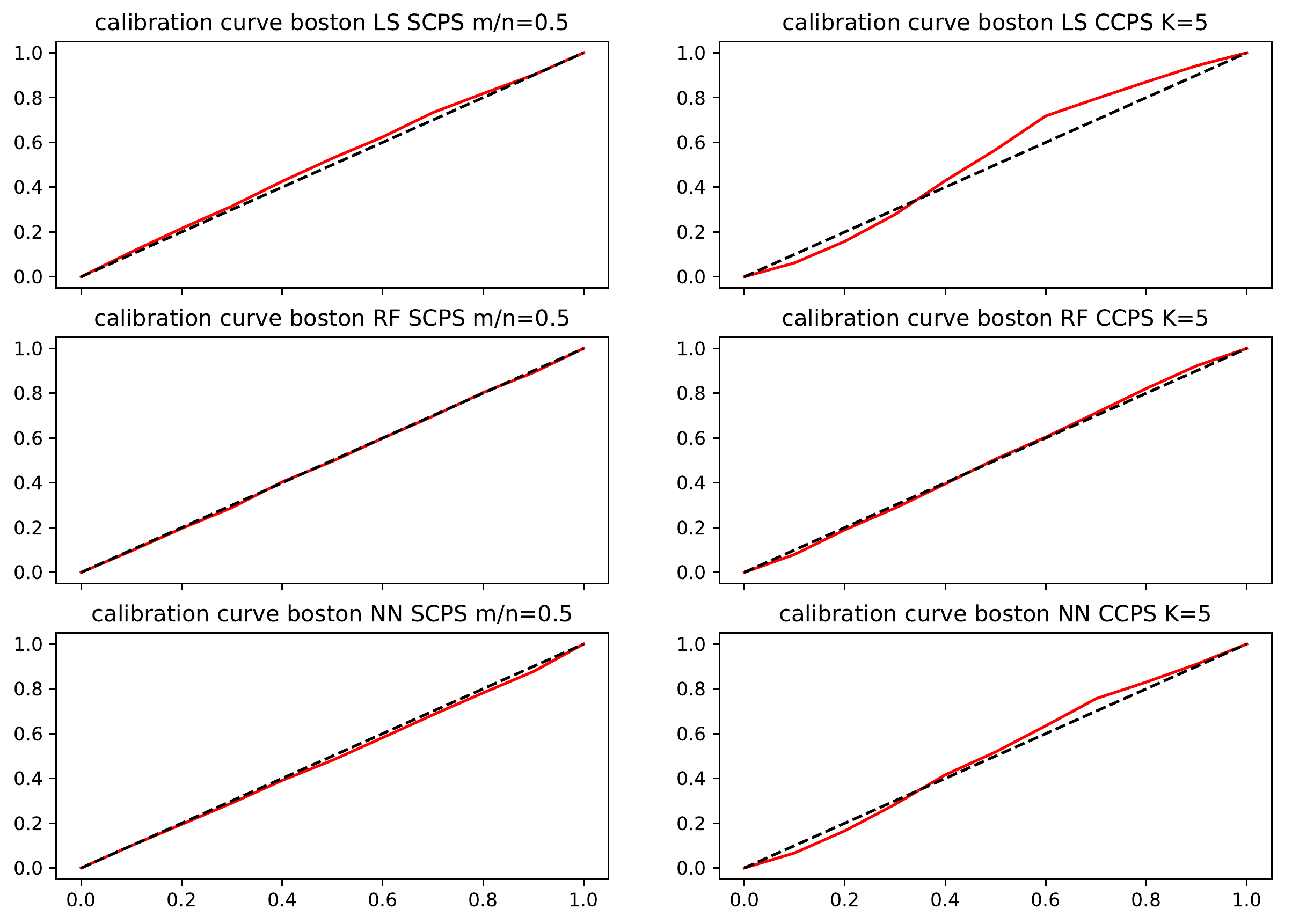}
  \end{center}
  \caption{The calibration curves
    (i.e., the distributions of $Q(z_1,\ldots,z_n,(x,y))$ over the test sequence)
    for the SCPS and CCPS on the \texttt{Boston Housing} dataset.}
  \label{fig:calibration_boston}
\end{figure}

\begin{figure}[bt]
  \begin{center}
    \includegraphics[width=0.99\textwidth]{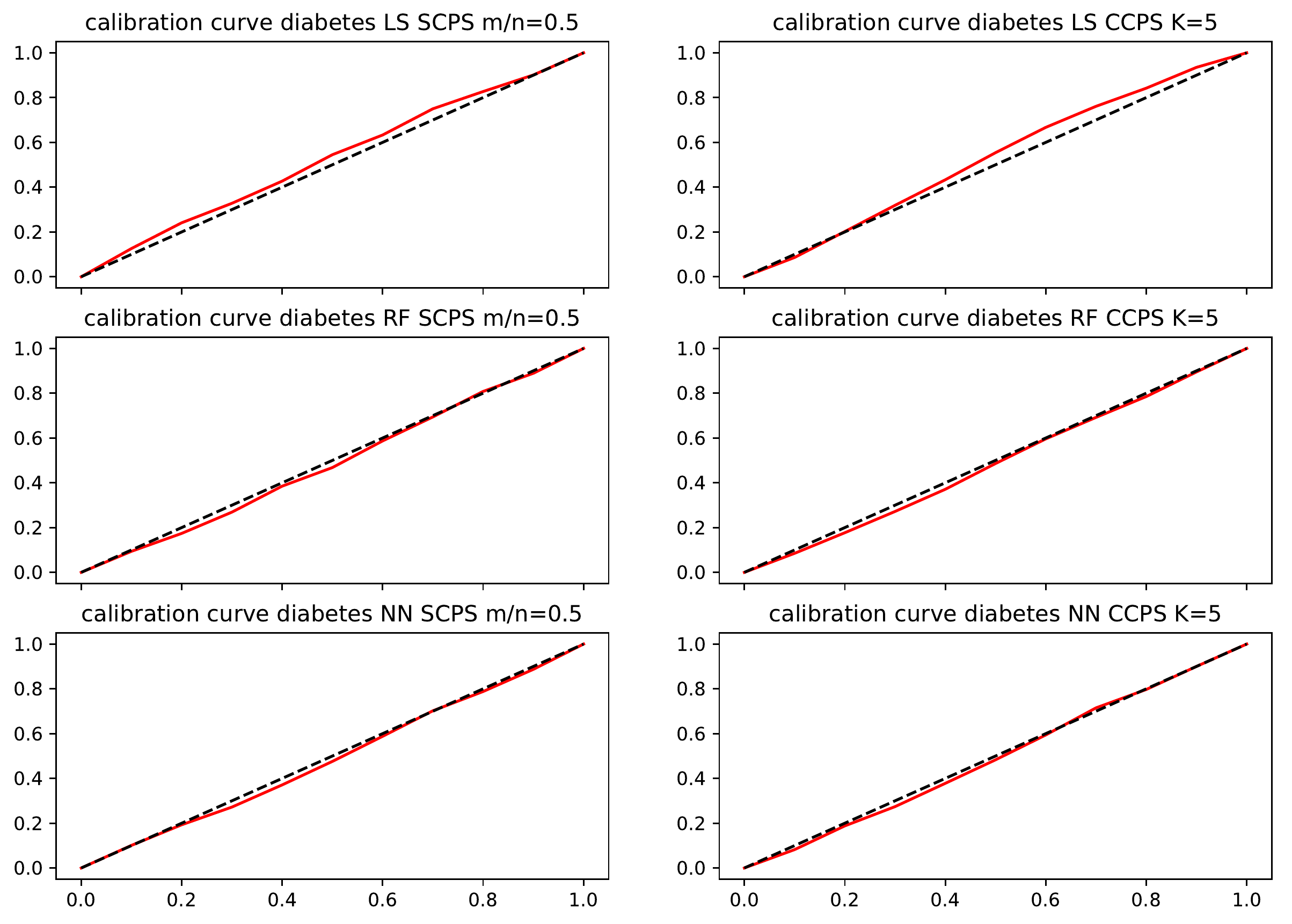}
  \end{center}
  \caption{The analogue of Figure~\ref{fig:calibration_boston} for the \texttt{Diabetes} dataset.}
  \label{fig:calibration_diabetes}
\end{figure}

\begin{figure}[bt]
  \begin{center}
    \includegraphics[width=0.99\textwidth]{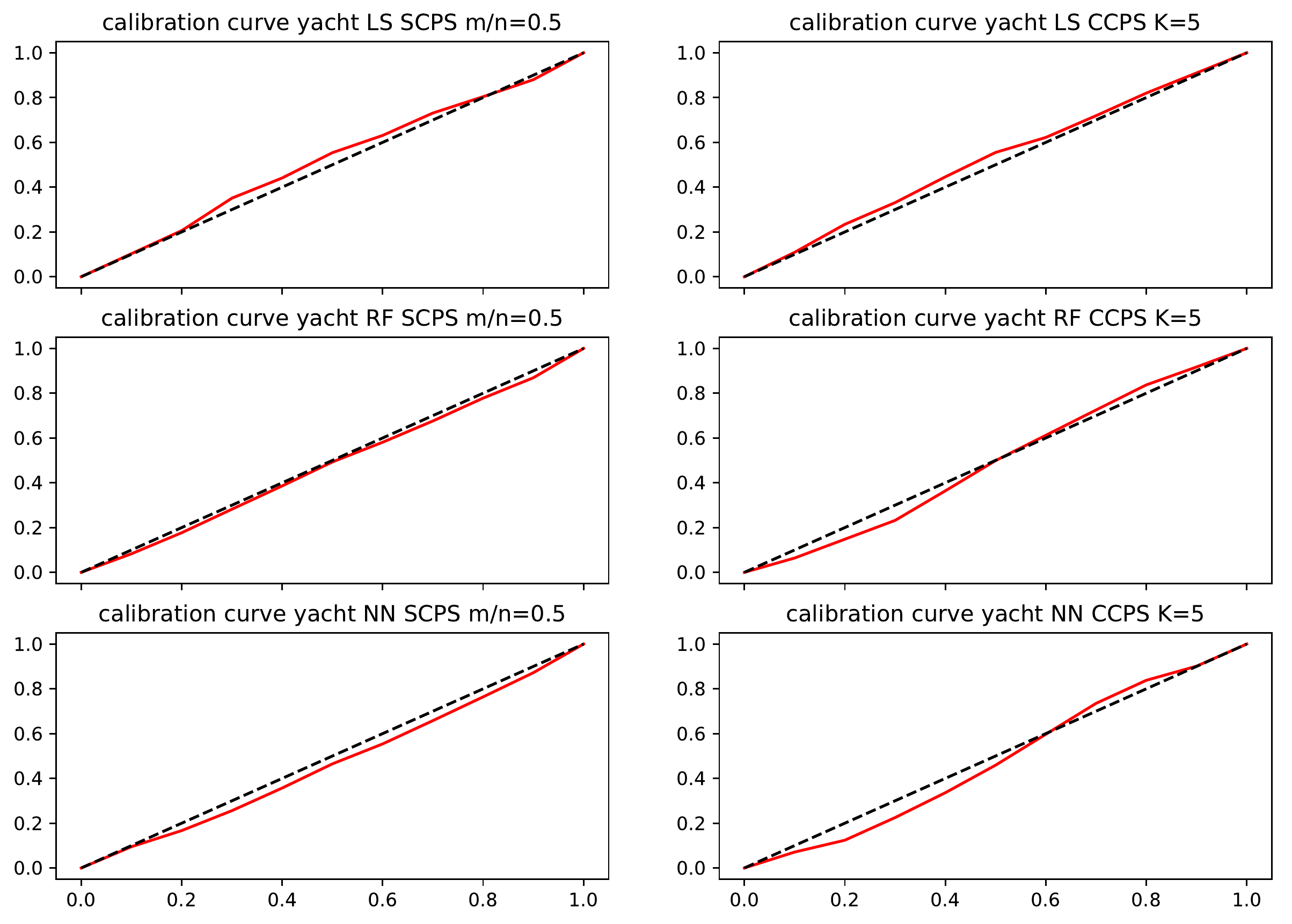}
  \end{center}
  \caption{The analogue of Figure~\ref{fig:calibration_boston} for the \texttt{Yacht Hydrodynamics} dataset.}
  \label{fig:calibration_yacht}
\end{figure}

\begin{figure}[bt]
  \begin{center}
    \includegraphics[width=0.99\textwidth]{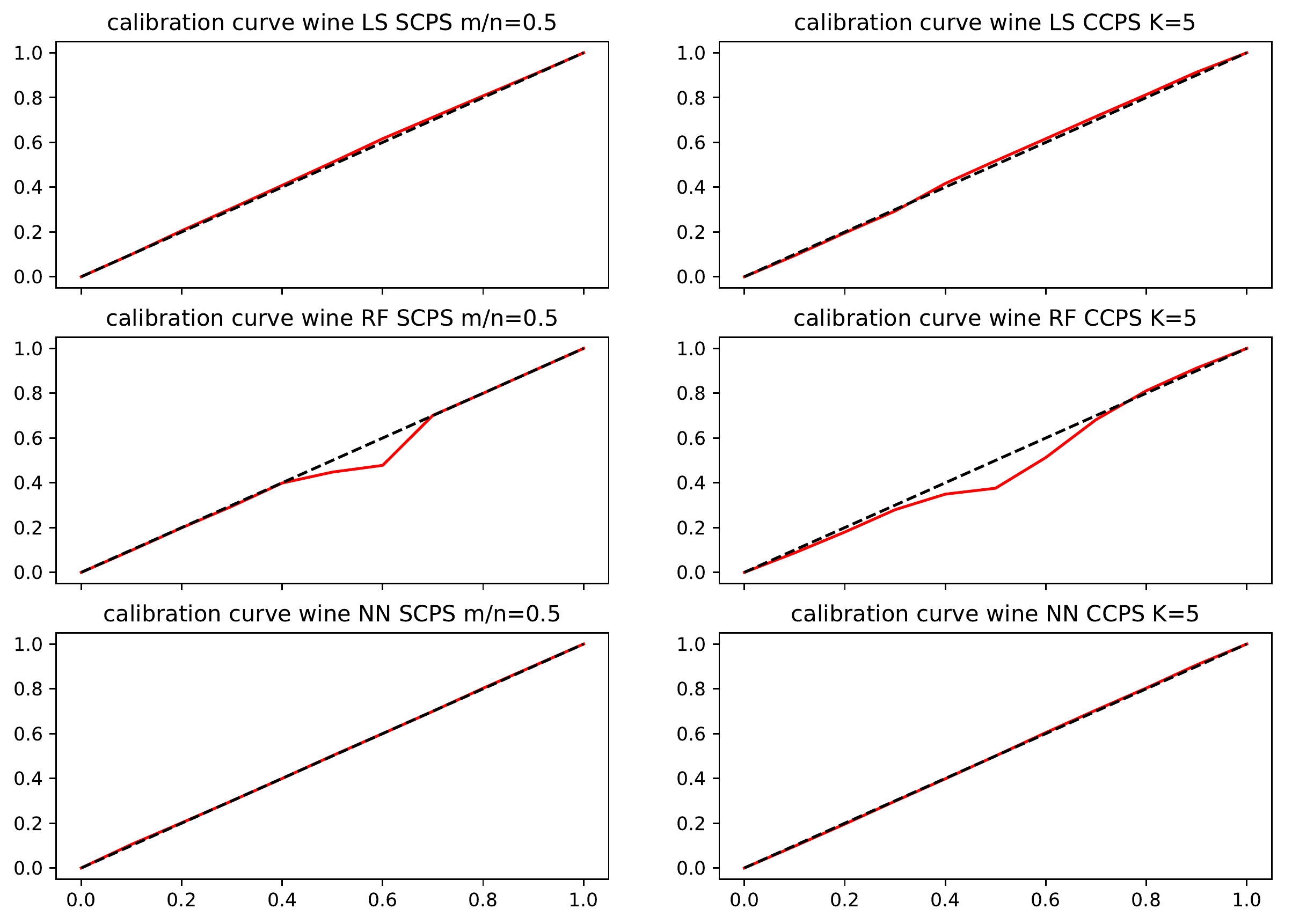}
  \end{center}
  \caption{The analogue of Figure~\ref{fig:calibration_boston} for the \texttt{Wine Quality} dataset.}
  \label{fig:calibration_wine}
\end{figure}

\begin{figure}[bt]
  \begin{center}
    \includegraphics[width=0.99\textwidth]{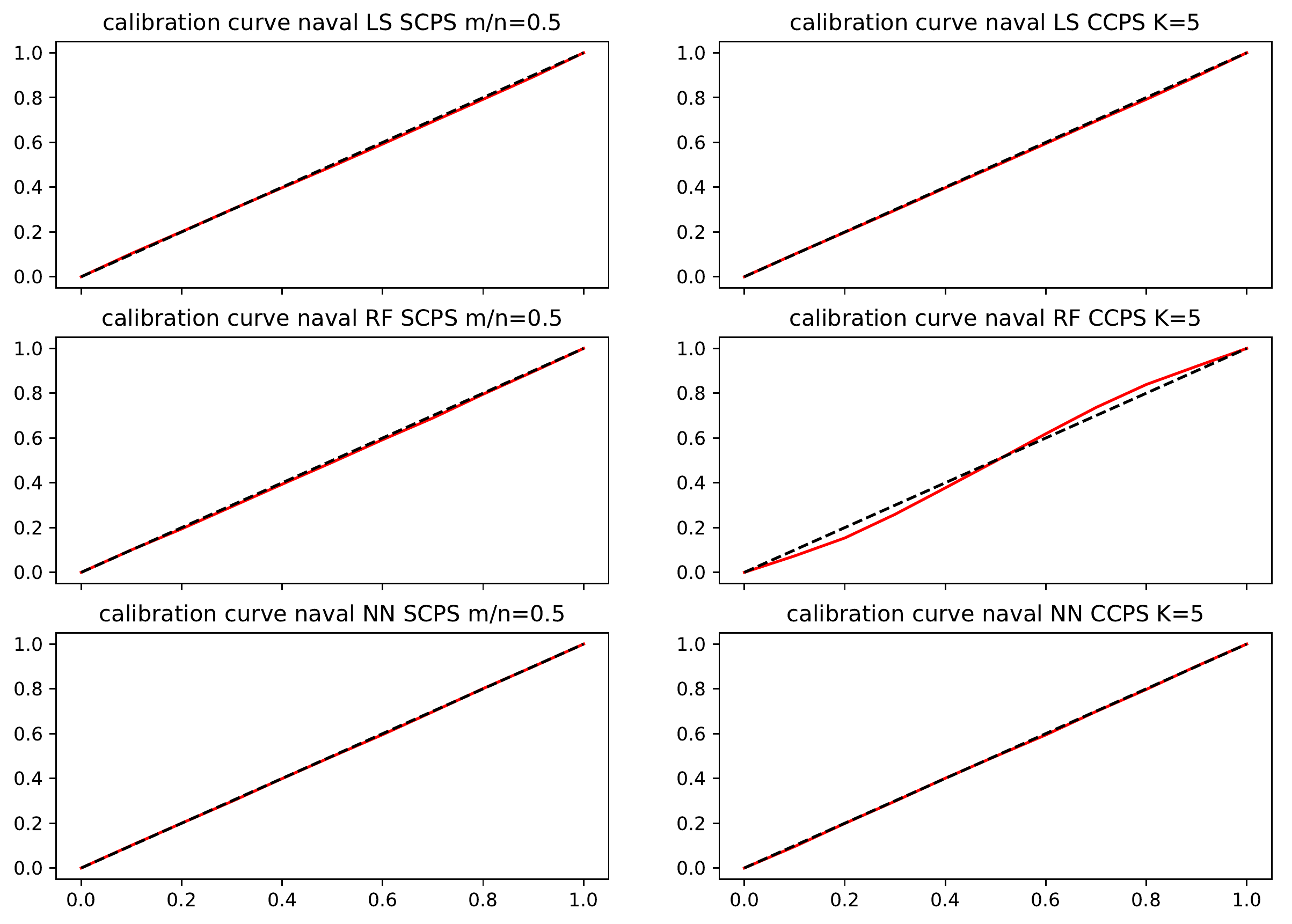}
  \end{center}
  \caption{The analogue of Figure~\ref{fig:calibration_boston} for the \texttt{Naval Propulsion} dataset.}
  \label{fig:calibration_naval}
\end{figure}

A natural question is whether the CCPS satisfy the property of validity R2 at least approximately;
remember that there are no theoretical validity results for cross-conformal predictors,
and it has been demonstrated theoretically \cite[Appendix A]{\OCMVI}
and experimentally \cite{Linusson/etal:2017} that a loss of validity is possible.
Figure~\ref{fig:calibration_boston} (right panel)
shows the distribution of the values \eqref{eq:Q-2-crisp} for \texttt{Boston Housing} and $K=5$,
where $z_1,\ldots,z_n$ is the training sequence, and $(x,y)$ range over the elements of the test sequence.
Namely, it gives the \emph{calibration curves}, which are the sets of points $(\alpha,F(\alpha))$,
$\alpha\in(0,1)$ ranging over the possible significance levels
and $F(\alpha)$ being the percentage of the values $Q(z_1,\ldots,z_n,(x,y))$
for $(x,y)$ in the test sequence that do not exceed $\alpha$.
The right panels of Figures~\ref{fig:calibration_diabetes}, \ref{fig:calibration_yacht},
\ref{fig:calibration_wine}, and~\ref{fig:calibration_naval}
are the analogues for the \texttt{Diabetes}, \texttt{Yacht Hydrodynamics},
\texttt{Wine Quality}, and \texttt{Naval Propulsion} datasets, respectively.
Under perfect validity \eqref{eq:R2} and an infinitely long test sequence,
the calibration curves should be the diagonals shown as dashed lines
on both panels of Figures~\ref{fig:calibration_boston}--\ref{fig:calibration_naval};
the actual calibration curves are fairly close.
The calibration curves for other $K$ are roughly similar.
As mentioned earlier, we also give calibration results for SCPS
(in the left panels and with $m/n\approx0.5$).

Not only is the efficiency of the CCPS with respect to the CRPS loss
better than that of the SCPS, it can also be argued that the CCPS may be safer
from the point of view of validity.
Suppose that, for some reason, we would like to avoid randomization
and use \eqref{eq:Q-1-crisp} (in the case of SCPS) or \eqref{eq:Q-2-crisp} (in the case of CCPS)
instead of \eqref{eq:Q-1} or \eqref{eq:Q-2}, respectively.
The CCPS is still empirically valid in our experiments,
even in the extreme case of $K=100$.
On the other hand, when using \eqref{eq:Q-1-crisp} in place of \eqref{eq:Q-1},
the SCPS lose not only theoretical but also empirical validity.
For example, for \texttt{Boston Housing} and $m/n=0.99$
(the right end of the horizontal axis in the left panel of Figure~\ref{fig:boston}),
the length of the calibration sequence is $4$,
and so the empirical predictive distribution \eqref{eq:Q-1-crisp}
only takes values in $\{0,0.25,0.5,0.75,1\}$;
the distribution of its values at the true labels is clearly very different from being uniform.

\section{Universal consistency of predictive systems}
\label{sec:univ}

The conference version \cite{Vovk/etal:2018COPA} of this paper
was published in the proceedings of COPA 2018,
which also contained a paper \cite{Nouretdinov/etal:2018COPA}
that adapted Venn prediction for producing predictive distributions.
In this and next sections we will analyze
the asymptotic performance of the two approaches to predictive distributions,
using conformal prediction \cite{Vovk/etal:2018COPA}
and using Venn prediction \cite{Nouretdinov/etal:2018COPA}.
Our conclusion is that, as implemented in those papers,
both approaches are very restrictive.
But whereas the approach based on conformal prediction can be easily rescued,
fixing the approach based on Venn prediction might require
sacrificing computational efficiency.
In this section we discuss the former approach.

Informally, an RPS is universally consistent if it gives the true predictive distribution
in the limit,
and a class of RPS is universal if it contains such an RPS.
The following formalization is given in \cite{\OCMXVIII}
and its idea goes back to Belyaev's work
(see, e.g., \cite{Belyaev/Sjostedt:2000}).
\begin{definition}\label{def:Belyaev}
  An RPS $Q$ is \emph{consistent} for a probability measure $P$ on $\mathbf{Z}$
  if, for any bounded continuous function $f:\R\to\R$,
  \begin{equation}\label{eq:Belyaev}
    \int f \dd Q_n
    -
    \Expect(f\mid x_{n+1})
    \to
    0
    \qquad
    (n\to\infty)
  \end{equation}
  in probability, where:
  \begin{itemize}
  \item
    $Q_n$ is the predictive distribution function \eqref{eq:Q} for the label of $x_{n+1}$
    based on the training sequence $z_1,\dots,z_n$;
    the integral $\int f \dd Q_n$ is not quite standard since we did not require $Q_n$ to be exactly a distribution function,
    and we understand it as $\int f \dd \bar Q_n$
    with the measure $\bar Q_n$ on $\R$ defined by $\bar Q_n((u,v]):=Q_n(v+)-Q_n(u+)$
    for any interval $(u,v]$ of this form in $\R$;
  \item
    $\Expect(f\mid x_{n+1})$ is the conditional expectation for $f(y)$ given $x=x_{n+1}$
    assuming $(x,y)\sim P$
    (we fix a version of the conditional expectation);
  \item
    the data-generating and coin-tossing mechanisms are $z_i=(x_i,y_i)\sim P$, $i=1,\dots,n+1$,
    and $\tau\sim U$, assumed all independent.
  \end{itemize}
  We say that $Q$ is \emph{universally consistent}
  if it is consistent for any probability measure $P$ on $\mathbf{Z}$.
  A class of RPS is \emph{universal} if it contains a universally consistent RPS.
\end{definition}

The requirement of a class of RPS being universal
means that it does not impose insurmountable limits to getting the data-generating distribution right.

We will also apply Definition~\ref{def:Belyaev} to predictive systems
that are not required to satisfy the validity condition R2 in Definition~\ref{def:RPS}
(such as CCPS and predictive systems based on Venn prediction).
The following theorem assumes that the notions of SCPS and CCPS
have been slightly extended by allowing randomized conformity measures
(see, e.g., \cite[Section 9]{\OCMXVIII} for details).

\begin{theorem}\label{thm:universal}
  The class of SCPS is universal.
  The class of CCPS is also universal.
\end{theorem}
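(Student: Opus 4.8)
The plan is to prove universality by exhibiting, for the SCPS case, an explicit randomized split conformity measure whose corresponding SCPS is consistent for every $P$, and then to deduce the CCPS case from it. The guiding idea is that the predictive distribution function $Q_n$ produced by the split conformal transducer \eqref{eq:SCPS} is, up to the $O(1/(n-m+1))$ fuzziness and the $\tau$-term, the empirical distribution function of the calibration conformity scores $\alpha_i$ relative to the threshold $\alpha^y$. So if we choose a conformity measure $A$ such that, for a test object $x$, the map $y\mapsto A(z_1,\dots,z_m,(x,y))$ is (for large $m$) close to the true conditional c.d.f.\ $y\mapsto P(Y\le y\mid X=x)$ composed with a fixed strictly increasing rescaling, then the empirical c.d.f.\ of the calibration scores will be close to uniform on $[0,1]$, the thresholding will invert that rescaling, and $Q_n$ will converge to the true conditional c.d.f. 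Concretely, I would split the training sequence proper $z_1,\dots,z_m$ further, use part of it to fit a universally consistent estimator $\hat F_m(y\mid x)$ of the conditional c.d.f.\ (e.g.\ a kernel or nearest-neighbour conditional-distribution estimator, whose existence under the IID model is classical), and take the randomized conformity measure
\[
  A\bigl(z_1,\dots,z_m,(x,y)\bigr)
  :=
  \hat F_m(y\mid x) + \epsilon\,y,
\]
where $\epsilon>0$ is a small fixed constant and the $\epsilon y$ term is there only to make $A$ strictly isotonic and balanced (so that the resulting object is a genuine RPS by the first Proposition); in fact one can also inject independent uniform noise to break ties, which is exactly the "randomized conformity measure" extension the theorem invokes.

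The key steps, in order, would be: (i) recall/assume the existence of a universally consistent conditional-c.d.f.\ estimator $\hat F_m$ under the IID model, i.e.\ $\hat F_m(\cdot\mid X_{n+1})\to F(\cdot\mid X_{n+1})$ in a suitable (say $L^1$ or weak) sense in probability as $m\to\infty$; (ii) verify that $A$ above is balanced and strictly isotonic, hence gives an SCPS; (iii) show that when $\epsilon\to0$ slowly with $n$ (or is kept fixed and then removed in a second limit), the calibration scores $\alpha_{m+1},\dots,\alpha_n$, conditionally on the data used to fit $\hat F_m$, are approximately IID uniform on $[0,1]$ — this is the probability-integral-transform step, using that the $\hat F_m(Y_i\mid X_i)$ are approximately PIT-uniform once $\hat F_m$ is close to the truth; (iv) conclude, via Glivenko–Cantelli for the calibration empirical c.d.f.\ plus the fact that $Q_n(y,\tau)$ equals that empirical c.d.f.\ evaluated at $\alpha^y$ up to $O(1/(n-m))$ and up to the $\tau$-term, that $Q_n(y,\tau)\approx \hat F_m(y\mid X_{n+1})\approx F(y\mid X_{n+1})$ uniformly in $y$; (v) translate uniform convergence of c.d.f.'s into \eqref{eq:Belyaev} for bounded continuous $f$ by integration by parts / portmanteau, which also absorbs the harmless fuzziness in the definition of $\int f\dd \bar Q_n$. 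For the CCPS half, note that \eqref{eq:modified-mean} expresses the cross-conformal $p^y$ as a weighted average of the per-fold split p-values $p^y_k$ minus a $O(K/n)$ term; taking $K$ fixed, each fold is itself a (randomized) SCPS built from the same construction on $S_{-k}$, each of size $\Theta(n)$, so each $Q^{(k)}_n(y,\tau)\to F(y\mid X_{n+1})$ by the SCPS argument, and the convex combination — with weights summing to $1+O(K/n)$ and a vanishing $\tau$-correction — converges to the same limit; the consistency requirement \eqref{eq:Belyaev} does not demand R2, so the (possible) loss of exact validity of CCPS is irrelevant here.

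The main obstacle I expect is step (iii)–(iv): making precise that a \emph{data-dependent, imperfect} estimator $\hat F_m$ still yields calibration scores whose empirical c.d.f.\ converges to uniform, and doing so uniformly enough in $y$. The subtlety is that $\hat F_m$ depends on part of the training-proper sample, so conditioning is needed to restore independence of the calibration scores, and then one must control the gap between the empirical c.d.f.\ of the $\hat F_m(Y_i\mid X_i)$ and the uniform c.d.f.\ by a triangle inequality: one piece is the Glivenko–Cantelli deviation of $n-m$ genuine IID samples (fine as $n-m\to\infty$), and the other is the $L^1$ distance $\Expect|\hat F_m(Y\mid X)-F(Y\mid X)|$, which goes to $0$ by universal consistency of $\hat F_m$ but only in expectation, so a further Markov/Fubini argument is needed to push it inside the probability statement. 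One must also handle the interplay of the three growing/shrinking quantities $m$, $n-m$, and $\epsilon$ (and the number of tie-breaking noise variables), e.g.\ by letting $m\sim n/2$ and $\epsilon=\epsilon_n\to0$ slowly; the bookkeeping is routine once the scheme is set up, but getting the order of limits right is where care is required. Everything else — balancedness, strict isotonicity, the portmanteau translation, and the CCPS reduction — is comparatively mechanical given the Propositions already proved and the identity \eqref{eq:modified-mean}.
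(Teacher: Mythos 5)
Your proof is correct and is essentially the paper's: the paper disposes of the SCPS half by citing Theorem~31 of \cite{Vovk:2019COPA}, whose proof is precisely the construction you sketch (a universally consistent conditional-c.d.f.\ estimator fitted on the training sequence proper, used as a randomized, balanced, strictly isotonic conformity measure so that the calibration scores become asymptotically PIT-uniform and the calibration empirical c.d.f.\ recovers the true conditional distribution), and your reduction of the CCPS half to the SCPS half via \eqref{eq:modified-mean} is exactly the paper's argument. The only points to tighten are ones you already flag yourself: $\epsilon$ must be a sequence $\epsilon_n\to0$ tied to $n$ (a fixed $\epsilon$ leaves an asymptotic bias, and a ``second limit'' is unavailable because consistency is a property of a single predictive system), and it is the tie-breaking randomization, not the $\epsilon y$ term, that handles atoms of the conditional distribution.
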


\begin{proof}
  The class of SCPS being universal
  is a simplified version of \cite[Theorem~31]{\OCMXVIII}.
  For a fixed $K$,
  a $K$-fold CCPS outputs predictive distribution functions
  within $O(1/n)$ of the average of the predictive distribution functions
  output by the component SCPS
  (see \eqref{eq:modified-mean}),
  which immediately implies that the class of CCPS
  is also universal.
\end{proof}

Theorem~\ref{thm:universal} says that, in principle,
conformal predictive systems can adapt to any data-generating distribution.
However, specific conformal predictive systems considered in literature
are often not universally consistent.
This is particularly true for predictive systems based on full
(rather than split or cross-) conformal prediction,
where computational efficiency imposes severe restrictions
on the underlying algorithm.

As discussed earlier,
an example of a non-universal class of RPS is provided by \cite{\OCMXVII}:
they are based on the method of Least Squares and therefore far from being universal.
The extension of Least Squares to Kernel Ridge Regression given in \cite{\OCMXX}
still does not produce universality, even for universal kernels:
the Kernel Ridge Regression Prediction Machine introduced in \cite{\OCMXX} is not universal
since the shape of its predictive distributions is not tailored
to a specific test observation \cite[Section~7]{\OCMXX}.

Conformal predictive systems based on \eqref{eq:example}
are also not universal:
they allow any shape of the asymptotic predictive distribution function,
but this shape is adapted to the test object at hand
only by shifting and scaling it
(by \emph{shifting} a distribution function $F$
we mean replacing it by the distribution function $y\mapsto F(y-c)$ for some $c\in\R$,
and by \emph{scaling} we mean replacing it by $y\mapsto F(y/\sigma)$ for some $\sigma>0$).
The class \eqref{eq:example-special} of split and cross-conformity measures
considered in the experimental section is even more restrictive:
the asymptotic shape of the predictive distribution function
is adapted to the test object at hand only by a shift.
We will state in detail only the claim about the class \eqref{eq:example-special},
since it was the main class used in this paper,
and it is also the class used in \cite{\OCMXVII} and \cite{\OCMXX}.

Let us call, for want of a better name,
split conformity measures of the form \eqref{eq:example-special} \emph{simple}.
Suppose the probability measure $P$ generating the observations $(x,y)$
satisfies $\Expect\left|y\right|<\infty$ for $(x,y)\sim P$.
Set $A(x,y):=y-\hat y$,
where $\hat y:=\Expect(y\mid x)$ is a fixed version of the conditional expectation of $y$ given $x$
for $(x,y)\sim P$.
The \emph{ideal simple conformal predictive system} (ISCPS) for $P$
is defined as
\begin{multline}\label{eq:ISCPS}
  Q(z_1,\ldots,z_n,(x,y),\tau)
  :=
  \frac{1}{n+1}
  \left|\left\{i=1,\ldots,n\mid A(x_i,y_i)<A(x,y)\right\}\right|\\
  +
  \frac{\tau}{n+1}
  \left|\left\{i=1,\ldots,n\mid A(x_i,y_i)=A(x,y)\right\}\right|
  +
  \frac{\tau}{n+1},
\end{multline}
where $x$ is the test object.
The intuition behind this definition
is that we are given $P$ in advance and, therefore,
can use the whole training sequence as the calibration sequence;
a training sequence proper is not needed as $A$ is already the ideal simple conformity measure.
An ISCPS is an idealization of SCPS corresponding to an infinitely long training sequence proper
(allowing a perfect estimate of $\Expect(y\mid x)$).
Since CCPS are essentially combinations of SCPS,
our conclusions will also be applicable to CCPS.

Remember that the Kolmogorov distance between distribution functions $F$ and $G$ is
\[
  K(F,G)
  :=
  \sup_{u\in\R}
  \left|
    F(u) - G(u)
  \right|.
\]
Modify it by setting
\[
  K'(F,G)
  :=
  \adjustlimits
  \inf_{c\in\R}
  \sup_{u\in\R}
  \left|
    F(u-c) - G(u)
  \right|.
\]
This is not a metric anymore: $K'(F,G)=0$ only means that $F$ and $G$ coincide
to within a shift left or right.
The following proposition spells out the observation above that,
in the case of a simple conformity measure,
the asymptotic shape of the predictive distribution function
is adapted to the test object at hand only by a shift.

\begin{proposition}\label{prop:simple}
  Let $Q$ be an ISCPS.
  For all $n$, all $z_1,\dots,z_n\in\mathbf{Z}$,
  all $x,x'\in\mathbf{X}$, and all $\tau\in[0,1]$,
  \[
    K'
    \left(
      Q(z_1,\dots,z_n,(x,\cdot),\tau),
      Q(z_1,\dots,z_n,(x',\cdot),\tau)
    \right)
    =
    0.
  \]
\end{proposition}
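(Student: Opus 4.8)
The plan is to exploit the special structure of a simple conformity measure: $A(x,y)=y-\hat y$ depends on the label $y$ only through the difference $y-\hat y$, where $\hat y:=\Expect(y\mid x)$ depends on $x$ alone (and, for an ISCPS, not on the training sequence, since it is computed from $P$). First I would fix $n$, a training sequence $z_1,\dots,z_n$, and a number $\tau\in[0,1]$, and abbreviate $\alpha_i:=A(x_i,y_i)=y_i-\Expect(y_i\mid x_i)$; these are fixed real numbers that do not involve the test object. Introduce the auxiliary function
\begin{multline*}
  \Phi(t)
  :=
  \frac{1}{n+1}\left|\left\{i=1,\dots,n\mid\alpha_i<t\right\}\right|\\
  +
  \frac{\tau}{n+1}\left|\left\{i=1,\dots,n\mid\alpha_i=t\right\}\right|
  +
  \frac{\tau}{n+1},
\end{multline*}
which depends on $z_1,\dots,z_n$ and $\tau$ but not on the test object. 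Substituting $A(x,y)=y-\hat y$ into \eqref{eq:ISCPS} then gives, for the test object $x$ with $\hat y:=\Expect(y\mid x)$,
\[
  Q(z_1,\dots,z_n,(x,y),\tau)=\Phi(y-\hat y),
\]
and likewise $Q(z_1,\dots,z_n,(x',y),\tau)=\Phi(y-\hat y')$ with $\hat y':=\Expect(y\mid x')$, using the same fixed version of the conditional expectation.

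Next I would simply read off the shift. Put $c:=\hat y'-\hat y$. Then, for every $u\in\R$,
\begin{multline*}
  Q(z_1,\dots,z_n,(x,u-c),\tau)
  =
  \Phi(u-c-\hat y)\\
  =
  \Phi(u-\hat y')
  =
  Q(z_1,\dots,z_n,(x',u),\tau),
\end{multline*}
so the distribution function $Q(z_1,\dots,z_n,(x,\cdot),\tau)$ shifted by $c$ coincides pointwise with $Q(z_1,\dots,z_n,(x',\cdot),\tau)$. Hence the inner supremum in the definition of $K'$ vanishes for this particular value of $c$, and since that supremum is always nonnegative, the infimum over $c\in\R$ equals $0$, which is exactly the assertion of the proposition.

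There is essentially no obstacle here: the content of the statement is the structural remark that, for a simple conformity measure, $A$ acts on the label as a pure translation, so the entire dependence of the predictive distribution on the test object is absorbed into a horizontal shift of one and the same function $\Phi$. The only points that deserve a word of care are that $\Phi$ genuinely does not depend on $x$ (it is assembled only from the fixed scores $\alpha_1,\dots,\alpha_n$ and $\tau$), and that one uses a single fixed version of $\Expect(y\mid x)$ throughout, so that $\hat y$ and $\hat y'$ are unambiguously defined. Once these are noted, the displayed identity is exact rather than merely asymptotic, and therefore $K'=0$ holds for every finite $n$, as stated.
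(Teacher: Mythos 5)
Your proof is correct and follows essentially the same route as the paper's: both observe that, for a simple conformity measure, the predictive distribution is a single function (your $\Phi$, the paper's $F$) of $y-\hat y$, so the two distributions are exact shifts of each other and $K'$ vanishes. Your version merely makes the function $\Phi$ and the verification of the shift identity explicit.
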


\begin{proof}
  Since $A(x,y)=y-\hat y$,
  $Q(z_1,\dots,z_n,(x,\cdot),\tau)$ is of the form $F(\cdot-\hat y)$
  and $Q(z_1,\dots,z_n,(x',\cdot),\tau)$ is of the form $F(\cdot-\hat y')$
  for some numbers $\hat y$ and $\hat y'$ and some function $F$
  (see \eqref{eq:ISCPS}).
  Therefore, they are shifts of each other.
\end{proof}

Proposition~\ref{prop:simple} will remain true if $\hat y:=\Expect(y\mid x)$
in the definition of a simple conformity measure
is replaced by $\hat y:=f(x)$ for any function $f:\mathbf{X}\to\R$.

The idealized version of the split conformity measure \eqref{eq:example} is
\begin{equation}\label{eq:A}
  A(x,y)
  :=
  \frac{y-f(x)}{\sigma(x)}
\end{equation}
for some positive function $\sigma:\mathbf{X}\to(0,\infty)$.
Let us modify further the Kolmogorov distance by setting
\[
  K''(F,G)
  :=
  \adjustlimits
  \inf_{c\in\R,\sigma>0}
  \sup_{u\in\R}
  \left|
    F\left(\frac{u-c}{\sigma}\right) - G(u)
  \right|.
\]
Proposition~\ref{prop:simple} will continue to hold
if we allow ISCPS to use idealized split conformity measures \eqref{eq:A}
and replace $K'$ by $K''$.

Using split and cross-conformal predictive systems
rather than full conformal predictive systems
makes it much easier to design adaptive conformity measures.
One possibility is to use the Nadaraya--Watson estimate
(introduced by \cite{Nadaraya:1964} and \cite{Watson:1964} in the case of regression
and \cite{Rosenblatt:1969} in the case of density estimation)
\begin{equation}\label{eq:NW}
  F(y\mid x)
  =
  \frac
  {
    \sum_{i=1}^m
    \Sigma
    \left(
      \frac{y-y_i}{h_y}
    \right)
    K
    \left(
      \frac{x-x_i}{h_x}
    \right)
  }
  {
    \sum_{i=1}^m
    K
    \left(
      \frac{x-x_i}{h_x}
    \right)
  }
\end{equation}
of the conditional distribution function
for computing the conformity score of $(x,y)$
given $(x_1,y_1),\ldots,(x_m,y_m))$.
The parameters of the estimator~\eqref{eq:NW}
are a distribution function $\Sigma$
(e.g., the Heaviside step function or a smooth one,
such as the sigmoid $\Sigma(u):=1/(1+e^{-u})$,
in which case there is a unique solution to the equations
in Algorithms~\ref{alg:SCPS} and~\ref{alg:CCPS}),
a kernel $K$ (such as the Gaussian $K(u):=\exp(-u^2/2)$),
and bandwidths $h_x>0$ and $h_y>0$.
This is the topic of \cite{\OCMXXIII}.

\section{Split Venn--Abers predictive systems}
\label{sec:VA}

In this section we discuss an alternative to RPS
introduced in \cite{Nouretdinov/etal:2018COPA}
and based on Venn prediction.
We will obtain a modification of RPS defined as follows
(cf.\ Definition~\ref{def:RPS}).

\begin{definition}
  A function $Q:\mathbf{Z}^{n+1}\times\{0,1\}\to[0,1]$ is called an \emph{imprecise predictive system} (IPS)
  if it satisfies the following two requirements:
  \begin{itemize}
  \item[i]
    For each training sequence $(z_1,\ldots,z_n)\in\mathbf{Z}^n$ and each test object $x\in\mathbf{X}$,
    the function $Q(z_1,\ldots,z_n,(x,y),\tau)$ is monotonically increasing both in $y$ and in $\tau$.
    In other words, for either $\tau\in\{0,1\}$,
    the function~\eqref{eq:function} is monotonically increasing,
    and for each $y\in\R$,
    \[
      Q(z_1,\ldots,z_n,(x,y),0)
      \le
      Q(z_1,\ldots,z_n,(x,y),1).
    \]
  \item[ii]
    For each training sequence $(z_1,\ldots,z_n)\in\mathbf{Z}^n$ and each test object $x\in\mathbf{X}$,
    we have~\eqref{eq:y-minus-infty} and~\eqref{eq:y-infty}.
  \end{itemize}
\end{definition}
As compared with Definition~\ref{def:RPS}, we drop the validity condition~R2
(which may be stated separately in some form when needed).

We start the definition of a Venn-type IPS from an analogue of a conformity measure.
\begin{definition}
  A \emph{regressor} is a family of measurable functions $A_m:\mathbf{Z}^{m}\times\mathbf{X}\to\R$, $m=1,2,\dots$.
\end{definition}
\noindent
The intention is that $A_m(z_1,\ldots,z_{m},x)$ is the prediction for the label of $x$
computed from $z_1,\ldots,z_m$ as training sequence.
As before, we drop the lower index $m$ in $A_m$.
Now we can define a new kind of predictive systems.
\begin{definition}\label{def:SVAPS}
  Split the training sequence $z_1,\ldots,z_n$ into two parts:
  the \emph{training sequence proper} $z_1,\ldots,z_m$ and the \emph{calibration sequence} $z_{m+1},\ldots,z_n$.
  Suppose we are given a test object $x$ and a possible label $y\in\R$ for it.
  The output $Q(z_1,\ldots,z_n,(x,y),\tau)$, $\tau\in\{0,1\}$,
  of the \emph{split Venn--Abers predictive system of type $T$}
  determined by the regressor $A$,
  where $T\in\{1,2,3\}$,
  is defined as follows:
  \begin{itemize}
  \item
    set $s_i:=A(z_1,\ldots,z_m,x_i)$ for $i=1,\ldots,n$
    and set $s:=A(z_1,\ldots,z_m,x)$;
  \item
    fit an isotonic regression $g:\R\to\R$ to the training sequence $(s_i,y^*_i)$
    extended by adding $(s,\tau)$,
    where
    \begin{equation}\label{eq:y-star}
      y^*_i
      :=
      \begin{cases}
        0 & \text{ if $y_i\le y$}\\
        1 & \text{otherwise}
      \end{cases}
    \end{equation}
    and the range of $i$ is
    \begin{equation*}
      i
      =
      \begin{cases}
        m+1,\ldots,n & \text{if $T=1$}\\
        1,\ldots,m & \text{if $T=2$}\\
        1,\ldots,n & \text{if $T=3$};
      \end{cases}
      \end{equation*}
      the corresponding optimization problem is
      \begin{equation}\label{eq:optimization}
        (\tau - g(s))^2
        +
        \sum_i
        (y_i^* - g(s_i))^2
        \to
        \min
      \end{equation}
      under the restriction that $g$ is monotonically increasing;
  \item
    set
    \begin{multline}\label{eq:problem}
      Q(z_1,\ldots,z_n,(x,y),\tau)
      :=
      1
      -{}\\
      \frac
        {\left|\left\{i=m+1,\ldots,n\mid g(s_i)=g(s), y_i^*=1\right\}\right|+\tau}
        {\left|\left\{i=m+1,\ldots,n\mid g(s_i)=g(s)\right\}\right|+1}.
    \end{multline}
  \end{itemize}
  An IPS is a \emph{split Venn--Abers predictive system} (SVAPS) of type $T$
  if it is the split Venn--Abers predictive system determined by some regressor.
\end{definition}

Intuitively, a SVAPS tries to answer the question whether the label of the test object $x$ exceeds $y$
based on the answers \eqref{eq:y-star} for training objects;
the answer is given by the fraction in \eqref{eq:problem}.
Notice that:
\begin{itemize}
\item
  SVAPS satisfy the definition of an IPS.
  Indeed, suppose, e.g., that $y$ is sufficiently large (the case $y\to\infty$).
  Then we will have $y^*_i=0$ for all $i$.
  For $\tau=0$ we will have $g=0$ and the fraction in \eqref{eq:problem} will be 0.
  The argument for the case $y\to-\infty$ is analogous.
\item
  The integral $\int f \dd Q_n$ in \eqref{eq:Belyaev}
  may depend on $\tau$,
  and in the definitions of consistency and universality for SVAPS
  we require that \eqref{eq:Belyaev} hold for either value of $\tau$.
\end{itemize}

SVAPS of type 1 were introduced in \cite[Section~3.1.2]{Nouretdinov/etal:2018COPA}
as the most direct application of the Venn--Abers methodology \cite{\OCMVII,\OCMXIII}
to the problem of probabilistic regression.
Since the arguments $s_i$ and $s$ of the function $g$ in~\eqref{eq:problem}
are also used in the condition~\eqref{eq:optimization},
the expressions $g(s_i)$ and $g(s)$ in~\eqref{eq:problem}
are determined uniquely;
therefore, the definition~\eqref{eq:problem} of type 1 SVAPS is unambiguous.

SVAPS of type~2 were introduced in \cite[Section~3.1.3]{Nouretdinov/etal:2018COPA}
as a computational simplification of SVAPS of type~1;
additionally, \cite[Section~3.1.3]{Nouretdinov/etal:2018COPA} removes
the first addend in \eqref{eq:optimization}
(which is the key step in achieving computational efficiency).
For this version of SVAPS, the arguments $s_i$ and $s$ of the function $g$ in~\eqref{eq:problem}
are not necessarily used in the condition~\eqref{eq:optimization},
and so the expressions $g(s_i)$ and $g(s)$ in~\eqref{eq:problem}
may not be determined uniquely.
Somewhat arbitrarily,
we may define $g(t)$, for any $t\in\R$,
as $g(s_j)$ where $s_j$ is the nearest neighbour to $t$
among $s_i$, $i\in\{1,\ldots,m\}$;
in the case of ties, we choose $j$ as small as possible.
This makes the definition~\eqref{eq:problem} of type 2 SVAPS also unambiguous.

SVAPS of type~3 are a natural combination of SVAPS of types~1 and~2;
they are similar to SVAPS of type~1
in that the definition~\eqref{eq:problem} is for them unambiguous.

The validity guarantees for SVAPS are very different
from those that we have for CPS and SCPS;
see, e.g., \cite[Appendix~B]{\OCMXVIII}.

The following simple example illustrates severe restrictions of SVAPS
(of any type),
even when the training sequence is very long,
stemming from the score $A(z_1,\ldots,z_m,x)$ being just one number.

\begin{figure}[bt]
  \begin{center}
    \includegraphics[width=0.49\textwidth]{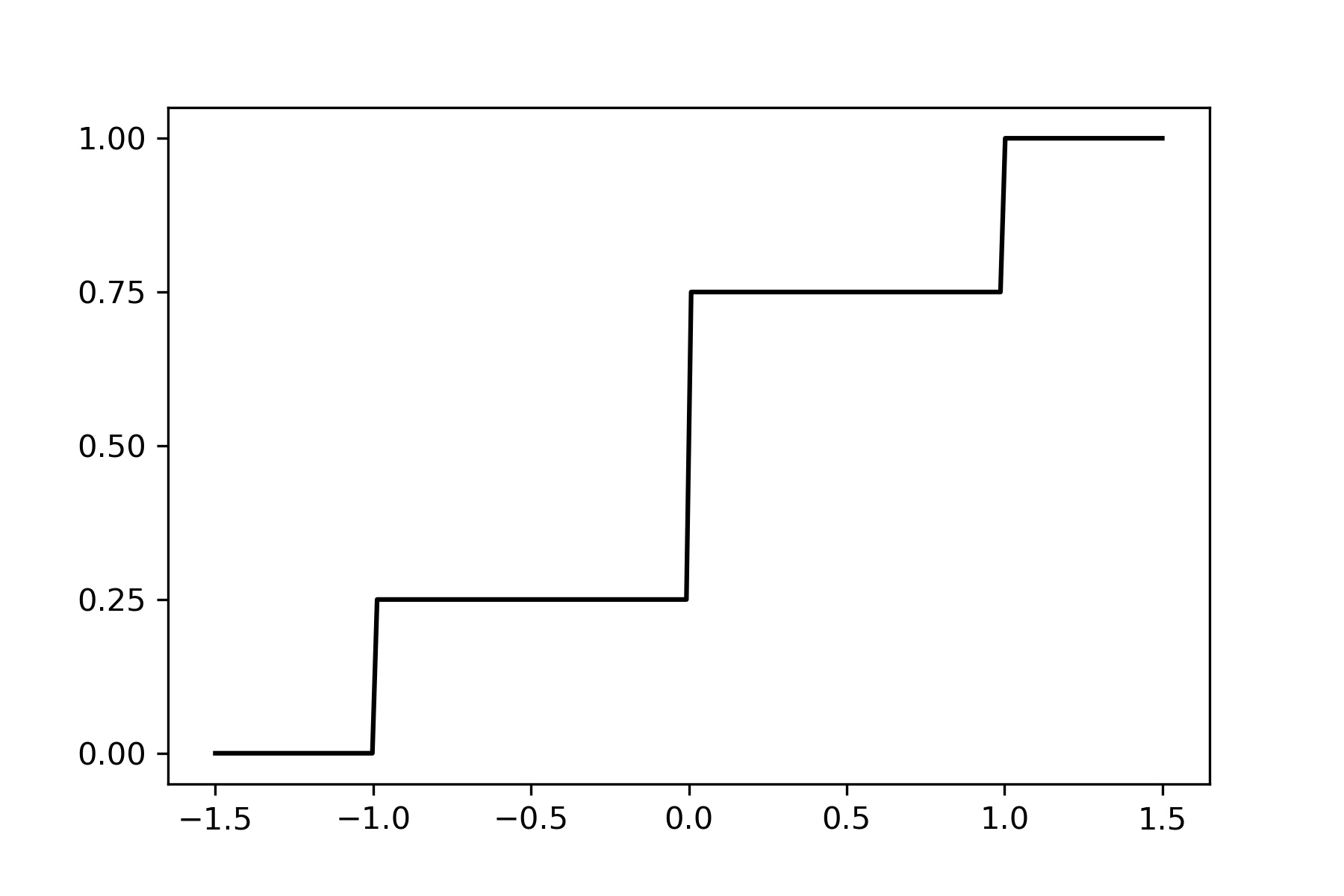}
  \end{center}
  \caption{The asymptotic predictive distribution produced by SVAPS for any $x$
    in Example~\ref{ex:problem}.}
  \label{fig:example}
\end{figure}

\begin{example}\label{ex:problem}
  The true distribution generating the observations $(x,y)$
  produces $x=0$ and $x=1$ with equal probabilities.
  Given $x=0$, we have $y=0$ with probability 1.
  Given $x=1$, we have $y=-1$ or $y=1$ with equal probabilities.
  Let us check that SVAPS are not consistent,
  even for the ideal regressor $A(z_1,\ldots,z_m,x):=0$.
  In the notation of Definition~\ref{def:SVAPS},
  we have $s=0$ and $s_i=0$ for $i=1,\ldots,n$.
  The asymptotic predictive distribution is shown in Figure~\ref{fig:example},
  concentrated on $\{-1,0,1\}$,
  and assigns probabilities $1/4$, $1/2$, and $1/4$
  to  $-1$, $0$, and $1$, respectively.
  It is very poor;
  the expected CRPS (as defined in Section~\ref{sec:CRPS}) for it is $3/8$
  instead of the ideal $1/4$.
\end{example}

Example~\ref{ex:problem} makes it plausible that the SVAPS are not universal;
now we state this formally.

\begin{proposition}\label{prop:problem}
  The SVAPS are not universal.
\end{proposition}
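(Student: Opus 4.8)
The plan is to turn Example~\ref{ex:problem} into a rigorous argument by making precise two things: first, that for \emph{any} regressor $A$ and \emph{any} type $T\in\{1,2,3\}$, the SVAPS cannot produce the correct conditional distribution for the specific $P$ of that example; and second, that this failure persists in the limit $n\to\infty$ in the sense of Definition~\ref{def:Belyaev}, i.e.\ \eqref{eq:Belyaev} fails for some bounded continuous $f$ and for at least one value of $\tau$. Since universality of a class means it contains \emph{some} universally consistent member, it suffices to exhibit one $P$ on which \emph{every} SVAPS (of every type) is inconsistent.

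First I would fix the distribution $P$ from Example~\ref{ex:problem}: $x\in\{0,1\}$ uniform, $y=0$ deterministically when $x=0$, and $y=\pm1$ uniform when $x=1$. The key structural observation is that the SVAPS output \eqref{eq:problem} depends on the test object $x$ only through the single real number $s=A(z_1,\ldots,z_m,x)$, and through which training scores $s_i$ are tied with it under the fitted isotonic regression $g$. I would argue that, almost surely over the training sequence, the conditional distribution of $y$ given $x=0$ (a point mass at $0$) and the conditional distribution given $x=1$ (the $\tfrac14,\tfrac14$ split on $\{-1,1\}$) are genuinely different distribution functions, yet the SVAPS produces an output that, as a function of the postulated label $y$, is determined by $s$ alone. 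If $A(z_1,\ldots,z_m,0)=A(z_1,\ldots,z_m,1)$ for infinitely many $n$ along a subsequence (which happens for the ideal regressor $A\equiv0$, and one can reduce the general case to a bad subsequence by a pigeonhole/continuity argument on the real line), then the SVAPS returns \emph{the same} predictive distribution function for $x=0$ and for $x=1$. No single distribution function can be within $o(1)$ in the sense of \eqref{eq:Belyaev} of both $\delta_0$ and $\tfrac12\delta_{-1}+\tfrac12\delta_1$, because these have, e.g., different values of $\int f\,dQ_n$ for $f(t)=\cos(\pi t)$ (namely $1$ versus $-1$). Hence \eqref{eq:Belyaev} must fail for $x_{n+1}=0$ or for $x_{n+1}=1$, each of which occurs with probability $\tfrac12$, so convergence in probability fails.

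The one subtlety is handling an \emph{arbitrary} regressor rather than just the ideal $A\equiv0$: a clever $A$ might try to separate the scores $s(x{=}0)$ and $s(x{=}1)$. Here I would invoke the fact that $s$ is a single real number and that the isotonic regression $g$ in \eqref{eq:problem} is piecewise constant with at most as many distinct levels as there are distinct score values; more importantly, I would observe that the \emph{shape} of the predictive distribution is still governed entirely by how the one-dimensional scores $\{s_i\}$ and $s$ interleave, together with the labels. Even if $s(x{=}0)\neq s(x{=}1)$, the predictive distribution for $x=1$ is built from an isotonic fit to $\{0,1\}$-valued data along the score axis, so its value at $y$ is some monotone-in-the-score quantity in $[0,1]$; to match the true conditional CDF at $x=1$ (which jumps from $0$ to $\tfrac12$ at $y=-1$ and from $\tfrac12$ to $1$ at $y=1$) and \emph{simultaneously}, for $x=0$, jump from $0$ to $1$ at $y=0$, one is forced into a contradiction with the monotone dependence on $s$ and the $\{0,1\}$-regression structure — essentially because the near-$s$ calibration labels $y_i^*$ at the test score location determine the answer, and $P$ forces these to be degenerate at $x=0$ but balanced at $x=1$ while $s$ carries no further information that is consistent across training sequences. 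I would make this precise by computing the almost-sure limit of \eqref{eq:problem} as a function of $y$, showing it equals the function in Figure~\ref{fig:example} (or a shift thereof that still cannot fit $\delta_0$).

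I expect the main obstacle to be the arbitrary-regressor case: writing down cleanly, for a general measurable $A$, what the $n\to\infty$ limit of the isotonic-regression-based output \eqref{eq:problem} is, and arguing it cannot depend on $x$ in a way rich enough to separate $\delta_0$ from $\tfrac12\delta_{-1}+\tfrac12\delta_1$. The cleanest route is probably to reduce to the ideal regressor by noting that consistency is a property of the limiting behavior, invoke the already-cited validity/consistency framework and results for SVAPS in \cite[Appendix~B]{\OCMXVIII}, and then run the explicit two-point computation of Example~\ref{ex:problem} to certify inconsistency; the rest is routine bookkeeping with $\int f\,dQ_n$ and convergence in probability.
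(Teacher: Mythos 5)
Your overall strategy---taking the single measure $P$ of Example~\ref{ex:problem} and showing that \emph{every} SVAPS is inconsistent for it---is the paper's strategy, and your treatment of the case $A(z_1,\dots,z_m,0)=A(z_1,\dots,z_m,1)$ (identical predictive distributions for the two test objects, which cannot both approximate $\delta_0$ and $\tfrac12\delta_{-1}+\tfrac12\delta_1$) is sound. The genuine gap is the case where the regressor separates the two scores, which you flag but do not resolve. Your two suggested fixes both fail. First, the claim that the almost-sure limit of \eqref{eq:problem} ``equals the function in Figure~\ref{fig:example} (or a shift thereof)'' is false: writing $A_0:=A(z_1,\dots,z_m,0)$ and $A_1:=A(z_1,\dots,z_m,1)$, when $A_1<A_0$ the isotonic regression pools the two populations only for some ranges of $y$, and the limiting predictive distributions for $x=0$ and $x=1$ are genuinely \emph{different} (weights $(0,3/4,1/4)$ versus $(1/2,1/4,1/4)$ on $\{-1,0,1\}$; see Figure~\ref{fig:10}, and Figure~\ref{fig:01} for the mirror case $A_0<A_1$). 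So your central ``same output for both $x$'' argument does not apply there. Second, there is no reduction to the ideal regressor and no pigeonhole passage to a subsequence with $A_0=A_1$: a regressor may deterministically output, say, $A_0=0$ and $A_1=1$ for every $m$, and consistency for such a regressor is not implied by inconsistency of the constant one.

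What is missing is the explicit computation in the separated case: for each range of the postulated label $y$ one must work out what the monotone fit does to the $\{0,1\}$-valued data $(s_i,y_i^*)$ and verify that the resulting limits are still wrong for \emph{both} values of $x_{n+1}$. This is exactly what the paper does; the punchline is that in every one of the three orderings of $A_0$ and $A_1$ the limiting distribution assigns mass exactly $1/4$ to $y=1$ (when $A_1\le A_0$) or to $y=-1$ (when $A_0<A_1$), whereas the true conditional mass there is $0$ for $x=0$ and $1/2$ for $x=1$; a continuous $f$ as in \eqref{eq:f} (or its reflection) then violates \eqref{eq:Belyaev} for both values of $x_{n+1}$ simultaneously, as in \eqref{eq:f-1} and \eqref{eq:f-2}. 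Note also that your test function $f(t)=\cos(\pi t)$ is tailored to the ``same output'' case and does not obviously discriminate once the two limits differ; and the appeal to \cite[Appendix~B]{\OCMXVIII} cannot substitute for the computation, since that reference concerns validity guarantees rather than consistency.
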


\begin{proof}
  Let the true probability measure be the one described in Example~\ref{ex:problem}.
  Let $n\to\infty$ and set $m:=\lfloor n/2\rfloor$.
  Set
  \begin{align*}
    A_0 &:= A(z_1,\ldots,z_m,0) \\
    A_1 &:= A(z_1,\ldots,z_m,1)
  \end{align*}
  (in Example~\ref{ex:problem} we only considered the case $A_1=A_0=0$).
  If $A_1=A_0$, we are in the situation of Example~\ref{ex:problem};
  see Figure~\ref{fig:example}.
  For a continuous $f:\R\to[0,1]$ satisfying
  \begin{equation}\label{eq:f}
    f(u)
    =
    \begin{cases}
      1 & \text{if $y\ge0.6$}\\
      0 & \text{if $y\le0.4$}
    \end{cases}
  \end{equation}
  we will have
  \begin{equation}\label{eq:f-1}
    \lim_{\substack{x_{n+1}=0\\n\to\infty}}
    \int f \dd Q_n
    =
    \frac14
    \ne
    0
    =
    \lim_{\substack{x_{n+1}=0\\n\to\infty}}
    \Expect(f\mid x_{n+1})
    \qquad
    \text{a.s.}
  \end{equation}
  and
  \begin{equation}\label{eq:f-2}
    \lim_{\substack{x_{n+1}=1\\n\to\infty}}
    \int f \dd Q_n
    =
    \frac14
    \ne
    \frac12
    =
    \lim_{\substack{x_{n+1}=1\\n\to\infty}}
    \Expect(f\mid x_{n+1})
    \qquad
    \text{a.s.}
  \end{equation}

  \begin{figure}[bt]
    \begin{center}
      \includegraphics[width=0.49\textwidth]{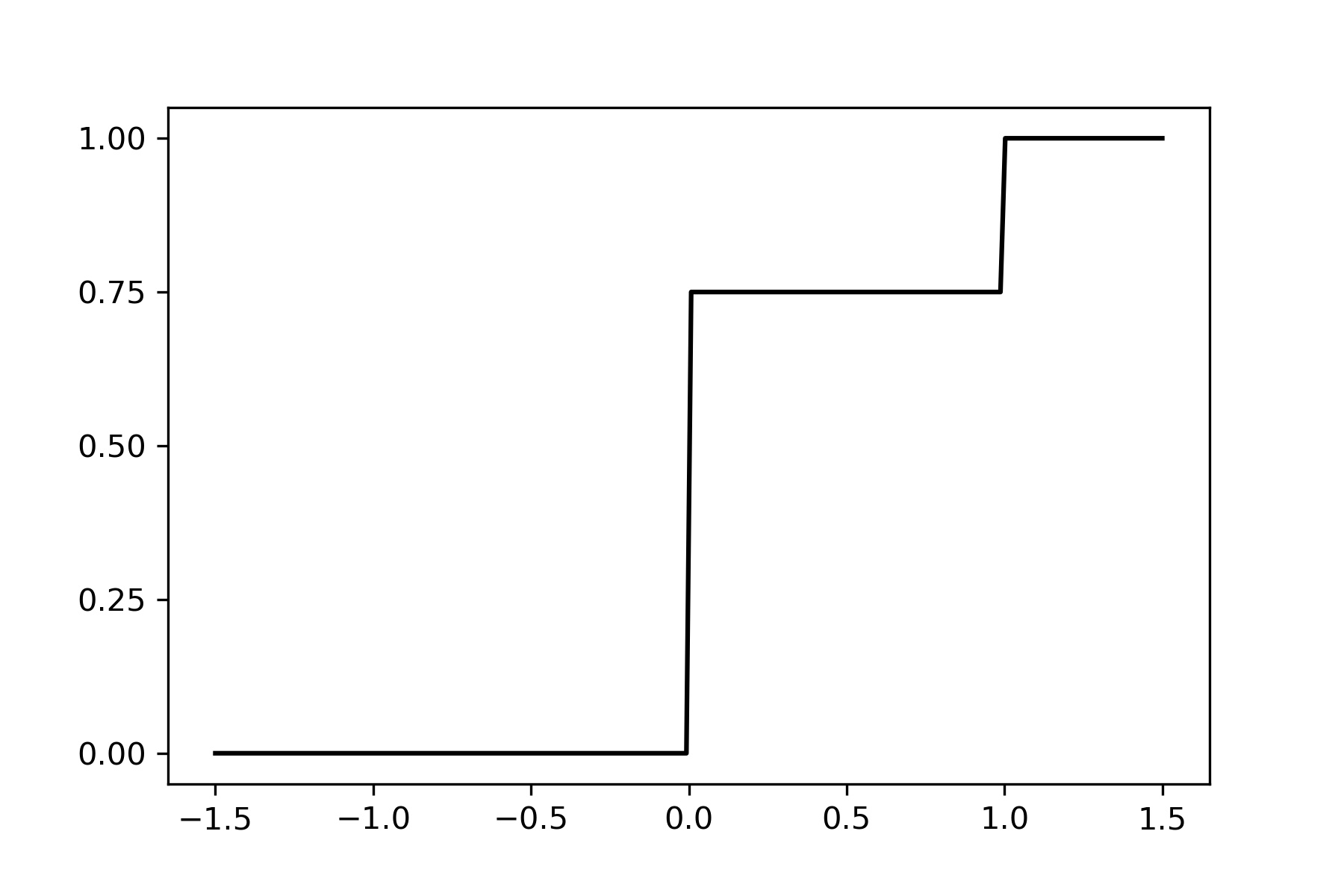}
      \hfil
      \includegraphics[width=0.49\textwidth]{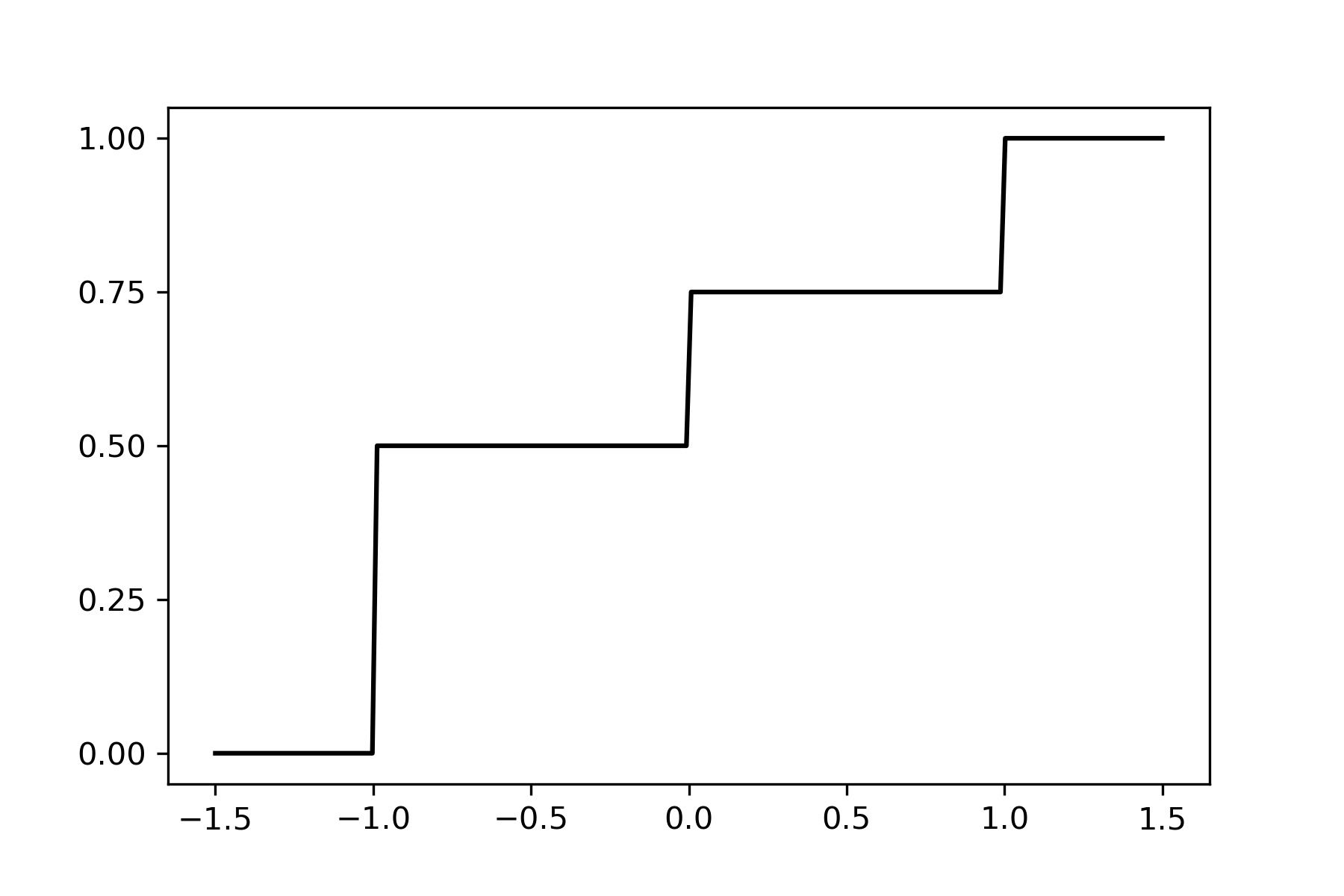}
    \end{center}
    \caption{The asymptotic predictive distributions produced by SVAPS
      when $A_1<A_0$
      for $x=0$ (left panel) and $x=1$ (right panel).}
    \label{fig:10}
  \end{figure}

  If $A_1<A_0$, the predictive distributions are as shown in Figure~\ref{fig:10}
  (the weights for $-1$, $0$, and $1$ are $0$, $3/4$, and $1/4$, respectively, when $x=0$,
  and $1/2$, $1/4$, and $1/4$, respectively, when $x=1$).
  Taking the same function $f$, satisfying \eqref{eq:f},
  we will still have \eqref{eq:f-1} and \eqref{eq:f-2}.

  \begin{figure}[bt]
    \begin{center}
      \includegraphics[width=0.49\textwidth]{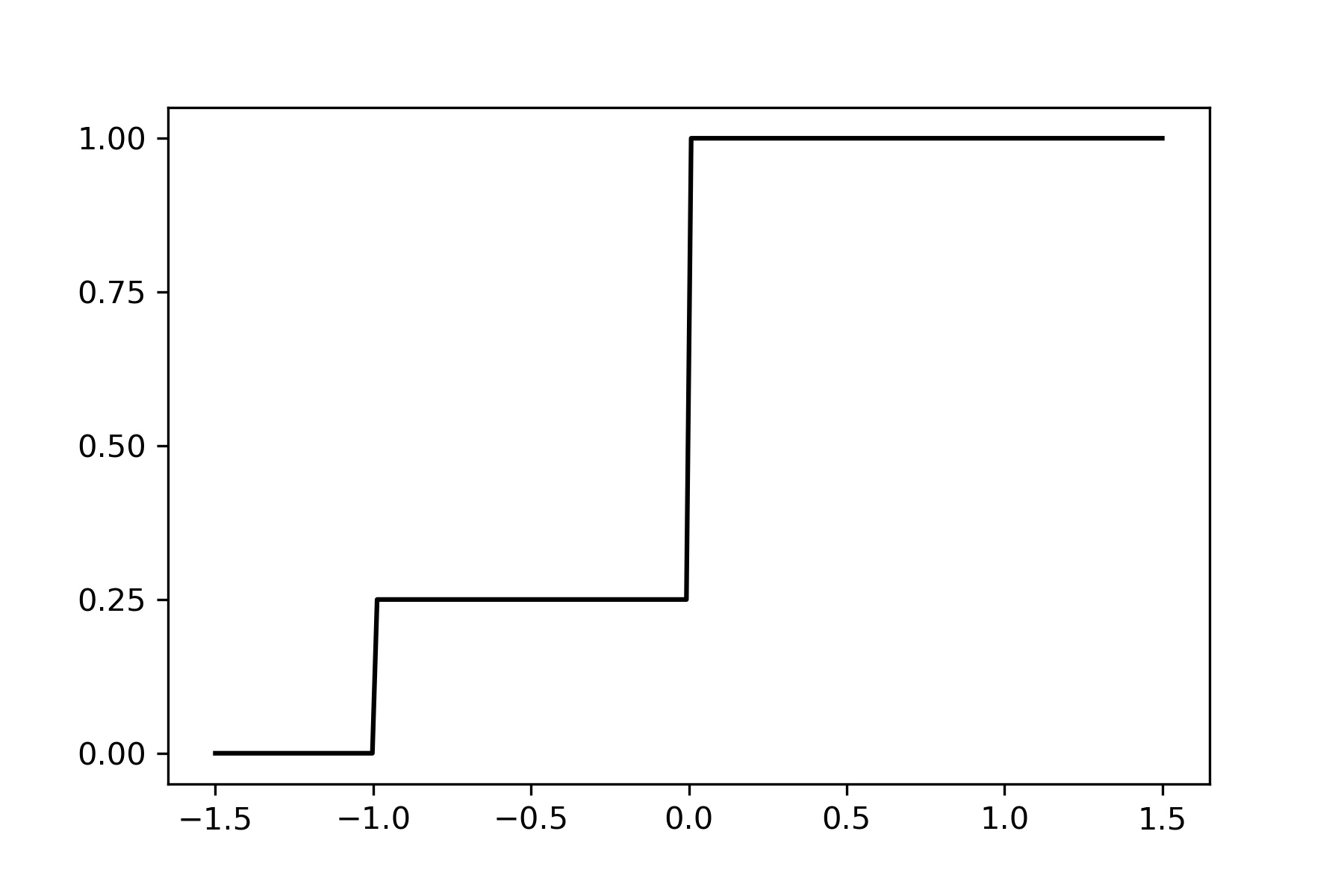}
      \hfil
      \includegraphics[width=0.49\textwidth]{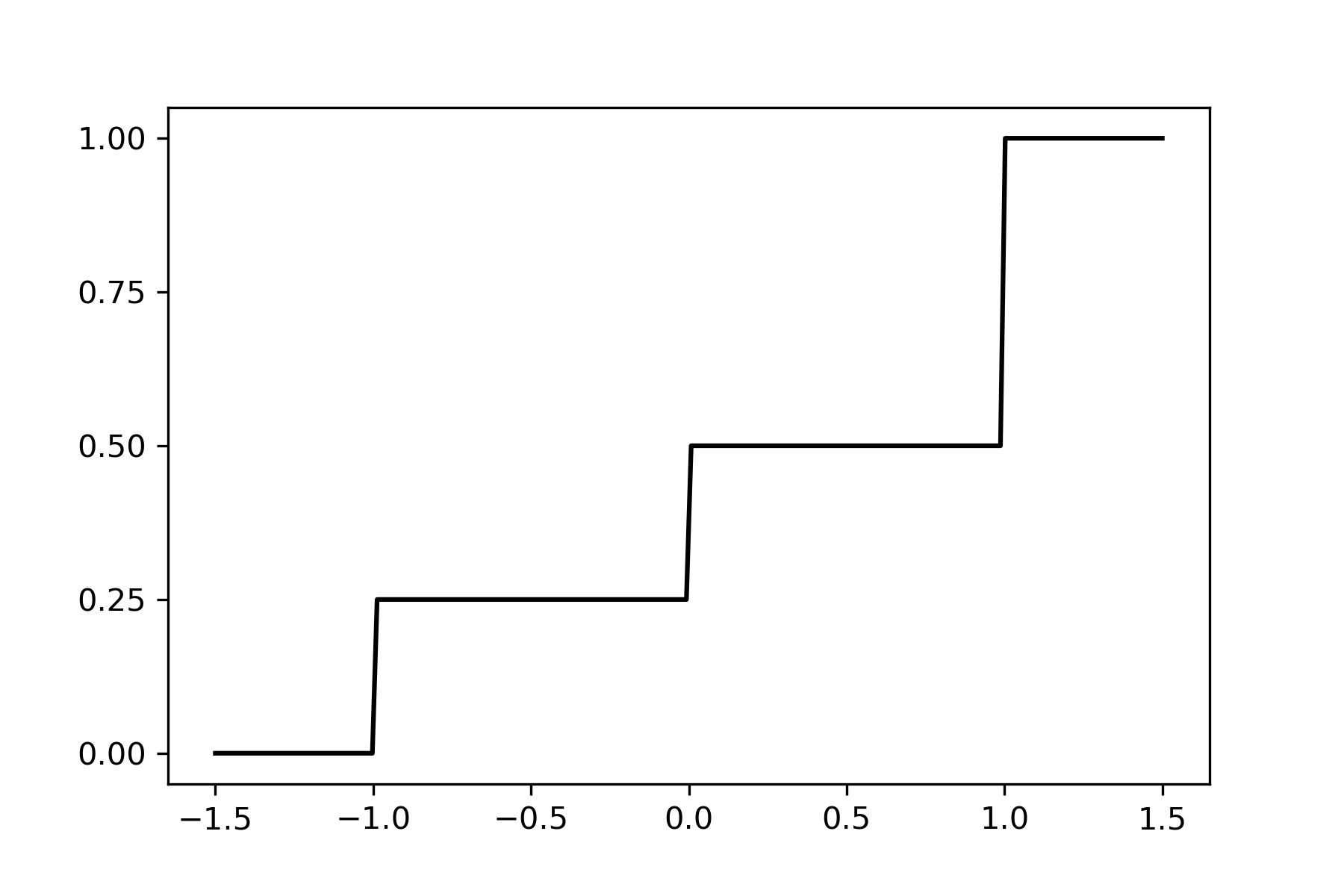}
    \end{center}
    \caption{The asymptotic predictive distributions produced by SVAPS
      when $A_0<A_1$
      for $x=0$ (left panel) and $x=1$ (right panel).}
    \label{fig:01}
  \end{figure}

  If $A_0<A_1$, the predictive distributions are as shown in Figure~\ref{fig:01}
  (the weights for $-1$, $0$, and $1$ are $1/4$, $3/4$, and $0$, respectively, when $x=0$,
  and $1/4$, $1/4$, and $1/2$, respectively, when $x=1$).
  For a continuous $f:\R\to[0,1]$ satisfying
  \begin{equation*}
    f(u)
    =
    \begin{cases}
      1 & \text{if $y\le-0.6$}\\
      0 & \text{if $y\ge-0.4$}
    \end{cases}
  \end{equation*}
  we will still have \eqref{eq:f-1} and \eqref{eq:f-2}.
\end{proof}

\begin{remark}
  In the proof of Proposition~\ref{prop:problem}
  we checked that the SVAPS are not universal
  directly, using the definition \eqref{eq:Belyaev}.
  This shows in their inferior CRPS,
  as in Example~\ref{ex:problem}.
  If $A_1=A_0$, we are in the situation of Example~\ref{ex:problem}.
  If $A_1<A_0$,
  the expected CRPS is $5/16$, which exceeds the ideal value $1/4$.
  And if $A_0<A_1$,
  the expected CRPS is also $5/16$.
\end{remark}

The asymptotic problem of non-universality for SVAPS can be avoided
by modifying, for each $y\in\R$,
the data sequence (including the training sequence proper) as follows:
replace each $y_i$ by $\II_{\{y_i\le y\}}$.
There are, however, two problems with this procedure:
\begin{itemize}
\item
  Loss of computational efficiency;
  now processing even moderately large datasets becomes infeasible.
\item
  Loss of predictive efficiency for small samples
  as now the labels become less informative
  (only taking values 0 or 1).
\end{itemize}

In this section we have only discussed SVAPS in detail,
but the same argument shows that cross-Venn--Abers predictive systems (defined in a natural way)
inherit the lack of universality.

\section{Conclusion}
\label{sec:conclusion}

In this paper we have given definitions and described ways of computing
split and cross-conformal predictive distributions.
We have studied their empirical performance
using five benchmark datasets and three underlying algorithms.
Cross-conformal predictive distributions are more efficient and,
in their non-randomized version,
sometimes closer to being valid.
It would be interesting to check the validity of our conclusions on a wider range of datasets
and underlying algorithms.

The specific split and cross-conformity measures used in this paper,
all of which have the form \eqref{eq:example-special},
are not fully adaptive, as discussed in Section~\ref{sec:univ},
whereas in general SCPS and CCPS are universal
(unlike SVAPS, as pointed out in Section~\ref{sec:VA}).
Replacing \eqref{eq:example-special} by the more general \eqref{eq:example}
somewhat improves the attainable flexibility,
but designing fully flexible cross-conformal predictive systems
based on efficient non-parametric predictive systems,
such as the Nadaraya--Watson system \eqref{eq:NW},
appears to us a particularly interesting direction of further research.

\subsection*{Acknowledgments}

Thanks to the AstraZeneca team (first of all Claus Bendtsen and Ola Engkvist) for useful discussions
and to Wang Di of Tianjin University for correcting an error.
We are very grateful to the three anonymous referees of the conference version of this paper
for their thoughtful comments;
because of time limitations we could not fully take some of them into account
in the conference version but have done it in this journal version.
Further comments by the three anonymous referees of the journal version are also gratefully appreciated;
they have led to important improvements (such as performing more extensive computational experiments).

\textbf{Funding:}
This work was supported by the EU Horizon 2020 Research and Innovation programme
[grant number 671555];
AstraZeneca [grant number R10911, ``Machine Learning for Chemical Synthesis''];
Leverhulme Magna Carta Doctoral Centre at Royal Holloway, University of London;
and NHS England supported by Innovate UK
[Technology Integrated Health Management project
awarded to the School of Mathematics and Information Security
at Royal Holloway, University of London].

\end{document}